\documentclass[times,twocolumn,final,authoryear]{elsarticle}



\usepackage{amsfonts,amsmath,amsthm,amssymb,bm}
\usepackage{latexsym}


\usepackage{color, xcolor, colortbl}
\definecolor{newcolor}{rgb}{.8,.349,.1}

\journal{Pattern Recognition Letters}

\usepackage{bibunits}

\usepackage{enumitem}
\usepackage{subcaption}

\usepackage{geometry}%
\geometry{left=0.65in, right=0.65in, top=0.70in, bottom=0.70in}

\usepackage{hyperref,url}
\hypersetup{
  colorlinks,
  citecolor=blue,
  filecolor=blue,
  linkcolor=blue,
  urlcolor=blue
}

\newtheorem{theorem}{Theorem}

\newtheorem{corollary}[theorem]{Corollary}

\newtheorem{lemma}[theorem]{Lemma}

\usepackage{enumitem}
\usepackage[referable,para]{threeparttablex}
\usepackage{footnote}
\usepackage{booktabs}  

\DeclareMathOperator{\MEAN}{E}
\DeclareMathOperator{\E}{E}

\newcommand{\RMS}{{\mathrm{RMS}}}
\newcommand{\MSE}{{\mathrm{MSE}}}

\newcommand{\Err}{Err}

\newcommand\Item[1][]{%
  \ifx\relax#1\relax \item \else \item[#1] \fi
\abovedisplayskip=0pt\abovedisplayshortskip=0pt~\vspace*{-\baselineskip}}

\begin{document}

\newcommand{\APPENDIX}{Appendix}

\begin{bibunit}[model2-names]
  \begin{frontmatter}
  \title{On The Smoothness of Cross-Validation-Based Estimators Of Classifier Performance} \author[WAY]{Waleed~A.~Yousef\corref{cor1}}
  \ead{wyousef@UVIC.ca} \cortext[cor1]{Corresponding Author}

  \address[WAY]{ECE Dep., University of Victoria, Canada; CS Dep., HCILAB, Faculty of Computers and Artificial Intelligence, Helwan
    University, Egypt.}

  \begin{abstract}
    Using the influence function (IF) to estimate the standard error of the bootstrap (BS)-based estimators, that estimate the
    classifiers' performance in terms of the error rate and the Area Under the ROC curve (AUC), was very successful. Only the
    ``smooth'' and differentiable versions of these BS-based estimators were eligible for IF approach. Since then, this successful
    approach is not extended to the cross validation (CV)-based estimators, which are more important because of their ubiquitous use in
    the majority of applications and by the majority of practitioners. The literature includes many versions of CV, and each version
    has different variants. The smoothness of these versions and variants needs to be studied before adopting the IF for any of them,
    which is the motivation behind the present article. First, we start by a mathematical formalization of many of these different
    versions and variants, that estimate the error rate and the AUC of a classification rule, to show the difference among them and the
    connection to the smooth version of the BS-based estimators. Second, we prove some of their properties and conclude that many of
    these variants are redundant and ``not smooth'', and therefore are not eligible for the IF approach, except the repeated $K$-fold
    CV (CVKR) and the Monte-Carlo CV (CVKM). For pedagogy and deeper understanding, we provide a novel experimental example that
    illustrate the smoothness property. Third, we discuss these versions and variants in terms of some known results in the literature,
    including effective training sample size, weak correlation, and mean squared estimation error; then, we conclude that there is
    still no final solid advice for practitioners, on the preference of a particular estimator, until a robust IF method is developed
    and a comprehensive comparative study is conducted.
  \end{abstract}

  \begin{keyword}
    Cross-Validation\sep Bootstrap\sep Error Rate\sep Area Under the ROC Curve\sep AUC\sep Influence Function\sep Smoothness.


  \end{keyword}
\end{frontmatter}


  \section{Introduction}\label{sec:introduction}
Using the influence function (IF) to estimate the standard error of the bootstrap (BS)-based estimators, that estimate the classifiers'
error rate was introduced in~\cite{Efron1997ImprovementsOnCross}. Then, it was extended by~\cite{Yousef2005EstimatingThe} to estimate
the standard error of the BS-based estimators that estimate the classifiers' Area Under the ROC curve (AUC). These BS-based estimators
were ``smooth'' and differentiable, as will be explained later, and therefore their standard error was estimable using the IF method
very successfully. However, the majority of practitioners, if not all, prefer the CV-based estimators because of their simplicity. Even
some pioneers of the field prefer CV because they are easier to explain to practitioners\footnote{Personal communication with Robert
  Tibshirani, Washington, DC., circa 2004, who is himself a coauthor of \cite{Efron1997ImprovementsOnCross} that adopts the BS and IF
  approaches.}. Therefore, it is quite important to extend the very successful IF approach to CV-based estimators. However, many
versions of CV exist in the literature and are even used by practitioners loosely without understanding the connection or difference
among them. To name a few: leave-one-out CV, $K$-fold CV, repeated CV, and Monte-Carlo CV, among other. Each of these versions has some
variants (e.g., whether to first sum over folds or over observations, as will be explained later), which may be thought of as trivially
equivalent. However, as will be shown in Section~\ref{sec:analysis} they are not. Before applying the IF method to these estimators we
have to study their properties, in general, and their smoothness property, in particular. A statistic is called smooth if a small
change in the observations leads to a small change in the statistic. Therefore, for instance, the sample mean is a smooth statistic,
whereas the sample median is not. The IF approach leverages the derivative of an estimator with respect to the perturbation of the
probability measure of the sample. The main contribution of the present article is to study the connection and differences among these
CV-based estimators and their connection to the smooth BS-based estimators, on one hand, and to investigate their smoothness and
differentiablity, and hence their suitability for the IF approach, on the other hand.

\medskip

The rest of this article is organized as follows.ection~\ref{sec:analysis} provides the mathematical formalization of the main versions
and variants of the CV-based estimators, namely: leave-one-out, $K$-fold, repeated $K$-fold, and Monte-Carlo, in addition to the
BS-based estimator because of its close connection to the CV and it known smoothness property. Section~\ref{sec:experimental-results}
explains theoretically and experimentally the smoothness and differentiablity property of these estimators.
Section~\ref{sec:theoritical-critique} discusses more properties of these estimators in the light of some known results in the
literature. Section~\ref{sec:conclusion} concludes the article and suggests the future research to be able to provide a final concrete
advice for practitioners. The~\APPENDIX~contains all the lemmas and their proofs, which are deferred to it because of the page size
limit and to make the article as lucid as possible to all readers, specially practitioners.



  \section{Mathematical Formalization of Estimators}\label{sec:analysis}
We treat the binary classification problem, where a classification rule is trained on the training set $\mathbf{X}$ and provides a
score $h_{\mathbf{X}}(x)$ when tested on an observation $x$, which will be classified as either belonging to class 1 (or class 2) if
$h_{\mathbf{X}}(x)$ is smaller (or larger) than a chosen threshold $th$. Two important performance measures, usually used to assess the
classification rules, are the error rate and the area under the ROC curve (AUC). For the error rate, we use the notation $Q\left(
  x,\mathbf{X}\right)$, where $Q$ is the 0-1 loss function to denote miss (or correct) classification of $x$. For the AUC (also
called the MannWhitney statistic, a form of the rank tests), we use the notation%
\begin{align}
  \widehat{AUC} &  =\frac{1}{n_{1}n_{2}}\sum_{i}\sum_{j} \psi\left(  h_{\mathbf{X}}(x_{i})  ,h_{\mathbf{X}}(y_{j})  \right),\label{eq:MWStat}
\end{align}
where $\psi(a,b) = 0,\ 0.5,\ 1$ when $a>b,\ a=b,\ a<b$, respectively. This means that the classifier is trained on the training set
$\mathbf{X}$ and then tested and produced scores on the $n_1$ and $n_2$ observations, from class 1 and class 2, respectively. Then, for
each pair of scores (out of the $n_1 \times n_2$ possible pairs) a single value of the MannWhitney kernel $\psi$ is calculated. In
terms of the $U$-statistic and rank test theories~\citep{Randles1979IntroductionTo, Hajek1999TheoryOfRank}, the error rate is a
one-sample statistic because it requires only a single observation, from either classes, to evaluate, whereas the AUC is a two-sample
statistic because it requires two observations, one from each class, to evaluate.

\medskip

As known for all resampling-based estimators, BS, CV, etc., we need to resample (draw a sub-sample) from the given sample that consists
of observations from both classes. Therefore, we have two possible resampling mechanisms: (1) ``with-stratification'', also called
``separate-sampling'', in which the original sample ($n$ observations) is split into two samples ($n_1$ and $n_2$ observations), each
includes observations only from one class, from each we resample independently. This guarantees obtaining observations from each class.
(2) ``without-stratification'', also called ``random-sampling'', in which we keep the original sample without splitting, then resample
directly from it as a single pool. This does not guarantee obtaining observations from each class. In principle, and aside from the
theoretical or the computational utility, we can use any of these two resampling mechanisms, regardless whether we estimate the error
rate or the AUC. However, in the present article we avoid redundancy and use, without any loss of generality (WLOG),
with-stratification for the AUC (a two-sample statistic) and without-stratification for the error rate (a one-sample statistic).

\subsection{Leave-One-Out Cross Validation (LOOCV)}\label{sec:leave-one-out-1}
\subsubsection{Error rate}
In LOOCV, the observation $x_{i}$ is excluded, then the classifier is trained on the remaining dataset $\mathbf{X}_{\left(i\right)} =
x_{1},\dots,x_{i-1},x_{i+1},\dots,x_{n}$ (or, for short, $\mathbf{X}_{(i)}=\left\{ x_{i'}:i' \neq i\right\}$), then tested on $x_{i}$.
The procedure is repeated $\forall i$ and the average over all observations is calculated. Therefore, LOOCV for the error rate is
formally defined as%
\begin{align}
  \widehat{Err}^{\left(CVN\right)  } & =\frac{1}{n}\sum_{i=1}^{n}Q\left(x_{i},\mathbf{X}_{\left(  i\right)  }\right).\label{eq:ErrCVN}
\end{align}
We denote it CVN, as opposed to some literature where it is called CV1, for consistency of notation, because it is a special case of
the $K$-fold CV (CVK) of the next section. LOOCV can be seen as partitioning the dataset of size $N$ to $N$ folds, each containing just
one observation.

\subsubsection{AUC}
As introduced above, stratification is used for AUC estimation, and the two datasets are resampled independently, one observation is
left out from each class to produce the two sets: $\mathbf{X1}_{(i)}=\left\{ x_{i'}:i' \neq i\right\},\ \mathbf{X2}_{(j)}=\left\{
  y_{j'}:j' \neq j\right\}$. Therefore, it is called a leave-pair-out estimator~\citep{Yousef2004ComparisonOf,
  Yousef2005EstimatingThe}. The classifier is trained on the set $\mathbf{X1}_{(i)} \cup \mathbf{X2}_{(j)}$ (in the sequel, the $\cup$
operator will be dropped for short), then tested on the two left out observations $x_i$ and $y_j$ to produce corresponding two scores,
from which the Mannwhitney statistic~\eqref{eq:MWStat} is calculated. The procedure is repeated $\forall i,j$ and the average over all
possible pairs is calculated. Hence, the CVN for AUC is formally defined as%
\begin{align}
  \widehat{AUC}^{\left(  CVN\right)  }  &  =\frac{1}{n_{1}n_{2}} \sum_{j=1}^{n_{2}} \sum_{i=1}^{n_{1}} \psi\left(h_{\mathbf{X1}_{\left(i\right)}\mathbf{X2}_{\left(j\right)}}(x_i) ,h_{\mathbf{X1}_{\left(i\right)}\mathbf{X2}_{\left(j\right)}}(y_j)\right).\label{eq:AUCCVN}
\end{align}

\subsection{Conventional (one run) K-fold CV (CVK)}\label{sec:conventional-one-run}
\subsubsection{Error rate}
The CVK is carried out by partitioning the dataset into $K$ partitions (folds), training on $K-1$ of them, then testing on the
remaining one. The partitions can be defined as a mapping that maps each observation to a partition: $\mathcal{K}:\left\{
  1,\dots,n\right\} \mapsto\left\{ 1,\dots,K\right\} ,~K=n/n_{K}$, then%
\begin{align}
  \mathcal{K}\left(  i\right)  =k,\quad n_{K}\left(  k-1\right)  <i\leq n_{K}k,\quad k=1,\dots,K, \label{EqKmapDef}
\end{align}
where $\sum_{i}I_{\left( \mathcal{K}\left( i\right) =k\right) }=n_{K}~\forall k$. For simplifying notation, and WLOG, $n_K$ is assumed to
be integer value. Then, the $k\textsuperscript{th}$ excluded partition is the set of observations
$\left\{x_{i}:\mathcal{K}\left(i\right) = k\right\}$ and the remaining dataset is $\mathbf{X}_{\left( \left\{ k\right\} \right)
}=\left\{ x_{i}:\mathcal{K}\left( i\right) \neq k\right\}$. The CVN is, therefore, a special case of the CVK, where
$n_{K}=1,~K=n,~\mathcal{K}\left( i\right) =i,~\mathbf{X}_{\left( \left\{ i\right\} \right) }=\left\{x_{i^{\prime}}:\mathcal{K}\left(
    i^{\prime }\right) \neq i\right\} =\left\{x_{i^{\prime}}:i^{\prime}\neq i\right\}=\mathbf{X}_{\left( i\right) }$. The classifier is
trained on $\mathbf{X}_{\left( \left\{ k\right\} \right) }$ and tested on the $k\textsuperscript{th}$ partition. The procedure is
repeated $\forall k$ and the average is calculated, once, over all tested observations. Therefore, the CVK estimator for the error rate
is formally defined as%
\begin{align}
  \widehat{Err}^{\left(CVK\right)} & =\frac{1}{n}\sum_{i=1}^{n}Q\left(x_{i},\mathbf{X}_{\left(  \{\mathcal{K}\left(  i\right)  \}\right)  }\right).\label{EQCVK}
\end{align}
However, the CVK could have been defined differently, denoted by the partitioned variant CVK*, as%
\begin{align}
  \widehat{Err}^{\left(CVK*\right)} & =\frac{1}{K} \sum_{k} \left[\frac{1}{n_{K}} \sum_{i \in \mathcal{K}^{-1}\left( k\right) } Q\left( x_{i},\mathbf{X}_{\left(\left\{k\right\}\right)}\right) \right].\label{EQCVKpart}
\end{align}
We show that (\APPENDIX) $\widehat{Err}^{\left(CVK\right)} = \widehat{Err}^{\left(CVK*\right)}$. The former is the ``pooled'' variant,
where the average is taken once over all observations. The latter is the ``partitioned'' variant, where the average is taken over the
observations of each partition, followed by another average over all partitions.

\subsubsection{AUC}
For the sake of generality, we assume that the number of observations $n_{1}$ and $n_{2}$, and the number of partitions $K_{1}$ and
$K_{2}$, are not necessarily equal. This allows the CVN to follow naturally from the CVK as a special case by setting $K_{1}=n_{1}$ and
$K_{2}=n_{2}$. Then, we have $\mathcal{K}_{1}:\left\{ 1,\dots ,n_{1}\right\} \mapsto\left\{ 1,\dots,K_{1}\right\},~K_{1}=n_{1}/n_{1K}$,
such that $\mathcal{K}_{1}\left( i\right) =k,~k=1,\dots,K_{1}$, and $\sum_{i=1}^{n_1}I_{\left(\mathcal{K}_{1}\left( i\right) =k\right)
}=n_{1K}~\forall k$. Similarly, the other partitioning function $\mathcal{K}_2$ is defined. More compactly, we could have written the
two partitions in one definition using a subscript $c=1,2$ for the two classes, respectively, as: $\mathcal{K}_{c}:\left\{ 1,\dots
  ,n_{c}\right\} \mapsto\left\{1,\dots,K_{c}\right\},~K_{c}=n_{c}/n_{cK}$, such that $\mathcal{K}_{c}(i)=k_c,~k_c=1,\dots,K_c$, and
$\sum_{i=1}^{n_c}I_{\left(\mathcal{K}_c(i) = k\right) }=n_{cK}~\forall k,\ c=1,2$. The classifier is trained on all partitions except a
partition from each class; i.e.\ the training set is
$\mathbf{X1}_{\left(\left\{k_{1}\right\}\right)}\mathbf{X2}_{\left(\left\{k_{2}\right\}\right)}$. Afterwards, the classifier is tested
on the two left-out partitions to produce corresponding two sets of $n_{1K}$ and $n_{2K}$ scores, from which $n_{1K} \times n_{2K}$
pairs of Mannwhitney statistic~\eqref{eq:MWStat} are calculated. The procedure is repeated $\forall k_1, k_2$, i.e., $K_1 \times K_2$
times, and the average MannWhitney is calculated once over the $n_1 \times n_2$ pooled pairs. Therefore, the CVK estimator for the AUC
is given by%
\begin{align}
  \widehat{AUC}^{\left(  CVK\right)  }  &  =\frac{1}{n_{1}n_{2}} \sum_{j=1}^{n_{2}}\sum_{i=1}^{n_{1}}\psi\left(  h(x_{i})  , h(y_{j})  \right),\label{eq:CVK-AUC}
\end{align}%
where $h = h_{\mathbf{X1}_{\left( \left\{ \mathcal{K}_{1}(i)\right\} \right) }\mathbf{X2}_{\left(\left\{\mathcal{K}_{2}(j)\right\} \right)
  }}$. Exactly as was done for the error rate above, the partitioned variant for estimating the AUC could have been defined as%
\begin{align}
  \widehat{AUC}^{\left(CVK*\right)}& =\frac{1}{K_{1}K_{2}}\sum_{k_{1}=1}^{K_{1}}\sum_{k_{2}=1}^{K_{2}}\times\notag\\
                                  &\left[\frac{1}{n_{1K}n_{2K}}\sum_{i\in\mathcal{K}_{1}^{-1}(k_{1})}\sum_{j\in\mathcal{K}_{2}^{-1}(k_{2})}\psi\left(h(x_{i}), h(y_{j})\right)\right],\label{eq:CVK*-AUC}
\end{align}
where $h = h_{\mathbf{X1}_{\left(\left\{k_{1}\right\}\right)}\mathbf{X2}_{\left(\left\{k_{2}\right\}\right)}}$. We show that (\APPENDIX)
$\widehat{AUC}^{\left( CVK\right) } = \widehat{AUC}^{\left(CVK*\right)}$ .

A third AUC CVK variant exists, which we observed from experience and personal communication with some practitioners. It is tempting to
use this variant because of its reduced computational speed. It is exactly the same as the partitioned variant CVK* but with enforcing
the two left-out partitions to have the same index. This means that the classifier will be trained (and tested) only $K$ times, rather
than $K^2$ times, each time on the two partitions $k_1 = k_2 = k= 1,\dots, K$. The formal definition is
\begin{align}
  \widehat{AUC}_{reduced}^{\left(  CVK *\right)  } &  =\frac{1}{K}\sum_{k=1}^{K}\left[ \frac{1}{n_{1K}n_{2K}} \sum_{i\in\mathcal{K}_{1}^{-1}(k)} \sum_{j\in\mathcal{K}_{2}^{-1}(k)} \psi\left(h(x_{i}), h(y_{j})\right)\right],\notag
\end{align}%
where $h=h_{\mathbf{X1}_{\left( \left\{ k\right\}\right) }\mathbf{X2}_{\left( \left\{ k\right\} \right) }}$. Obviously, it does not equal
to, and it exhibits more variance than, the variant~\eqref{eq:CVK*-AUC}. This rises from the fact that AUC is a two sample
$U$-statistic, whose variance is minimized only if it is averaged over all possible permutations of the two sets of observations, i.e.,
$n_{1} \times n_{2}$ times, which is guaranteed by the variant~\eqref{eq:CVK*-AUC}. However, for the reduced variant, the AUC kernel is
evaluated only $K(n_{1K} \times n_{2K})$ times. Clearly, this concern is not raised when we formalized the CVK estimator for the error
rate since it is a one-sample statistic.

\subsection{Repeated (Randomized) K-fold CV (CVKR)}\label{sec:repe-rand-k}
\subsubsection{Error rate}
This version of CV is $M$-time repetitions of the conventional CVK. In every repetition, there is a new randomized partition function
$\mathcal{K}_m(i) =\mathcal {K}\left(\mathcal{R}_{m}(i)\right)$, where $\mathcal{K}$ is defined in~\eqref{EqKmapDef} and
$\mathcal{R}_{m}$ is the $m^{\text{th}}$ one-to-one randomized mapping of $\left\{ i,\dots,n\right\} $ onto itself. In words, the same
conventional partitioning of the CVK is applied each iteration after randomly shuffling the dataset. The training sets, then, are
defined for each iteration $m$ as $\mathbf{X}_{\left( \left\{ k\right\} ,m\right) }=\left\{ x_{i}:\mathcal{K}_{m}\left( i\right) \neq
  k\right\}$. The conventional (one run) CVK follows naturally from the CVKR by setting $M=1$. In each of the $M$ repetitions, the
classifier is trained and tested using the conventional CVK. The scores are stored for all $n$ observations, and the average is taken
over all $M \times n$ scores. Therefore, formally
\begin{align}
  \widehat{Err}^{\left(  CVKR\right)  }  &  =\frac{1}{n}\sum_{i=1}^{n}\left[\frac{1}{M}  \sum_{m}Q\left(  x_{i},\mathbf{X}_{\left(  \left\{  \mathcal{K}_{m}\left(  i\right)  \right\}  ,m\right)  }\right)\right].\label{EqCVKRa}
\end{align}
On the other hand, the partitioned variant of the CVKR, where average is taken over repetitions, is defined as:%
\begin{align}
  \widehat{Err}^{\left(CVKR*\right)} &  =\frac{1}{M}\sum_{m=1}^{M} \left[\frac{1}{K}\sum_{k}  \frac{1}{n_{K}} \sum_{i\in\mathcal{K}_{m}^{-1}\left(k\right)} Q\left(x_{i},\mathbf{X}_{\left(\left\{k\right\} ,m\right)}\right)\right].\label{EqCVKRpart}
\end{align}
We show that (\APPENDIX) $\widehat{Err}^{\left( CVKR\right) } = \widehat{Err}^{\left(CVKR*\right)}$.

\subsubsection{AUC}
With stratification, we can define the two partitions using compact notation as: $\mathcal{K}_{cm}(i) =\mathcal
{K}\left(\mathcal{R}_{cm}(i)\right),\ c=1,2$ for the two classes respectively, where $\mathcal{K}$ is given by~\eqref{EqKmapDef} and
$\mathcal{R}_{cm}$ is the randomized function defined above for class $c$. The training sets are given by $\mathbf{Xc}_{\left( \left\{
      k\right\} ,m\right) }=\left\{ x_{i}:\mathcal{K}_{cm}\left( i\right) \neq k,~x_{i}\in \mathbf{Xc}\right\}$. Then, the CVKR
estimator is defined for the AUC as:%
\begin{align}
  \widehat{AUC}^{\left(  CVKR\right)  }  &  =\frac{1}{n_{1}n_{2}}\sum_{j=1}^{n_{2}}\sum_{i=1}^{n_{1}}\left[\frac{1}{M} \sum_{m}\psi\left(h\left(x_{i}\right), h\left(  y_{j}\right)  \right)\right],\label{eq:CVKR-AUC}
\end{align}
where $h= h_{\mathbf{X1}_{\left( \left\{ \mathcal{K}_{1m}(i)\right\}\right) }\mathbf{X2}_{\left( \left\{\mathcal{K}_{2m}(j)\right\}
    \right)}}$. The partitioned variant of this estimator is defined as:%
\begin{align}
  \widehat{AUC}^{\left(  CVKR*\right)  }  & =\frac{1}{M}\sum_{m=1}^{M}\left[\frac{1}{K_{1}K_{2}}\sum_{k_{1}=1}^{K_{1}}\sum_{k_{2}=1}^{K_{2}} \frac{1}{n_{1K}n_{2K}}\right.\notag\\
                                         &\left.\sum_{i\in\mathcal{K}_{1m}^{-1}(k_{1})}\sum_{j\in\mathcal{K}_{2m}^{-1}(k_{2})}\psi\left(h( x_{i}), h( y_{j})\right)\right],\label{eq:CVKR-AUCstar}
\end{align}
where $h = h_{\mathbf{X1}_{\left( \left\{ k_{1}\right\} \right) }\mathbf{X_2}_{\left( \left\{k_{2}\right\} \right) }}$. We show that
(\APPENDIX) $\widehat{AUC}^{\left( CVKR\right) } = \widehat{AUC}^{\left( CVKR*\right) }$.

\subsection{Monte-Carlo (MC) K-fold CV (CVKM)}\label{sec:monte-carlo-mc}
\subsubsection{Error rate}
CVKM is similar to CVKR except in that, in every repetition (called in literature MC run, or trial) there is only one partition to test
on. More clearly, there is only one training set of size $n-n_{K}$, and one testing set of size $n_{K}$. To be able to formalize and
analyze this version CV we proceed as follows. Because in each MC trial the observations are randomized, then WLOG we index the single
testing partition with $k=1$; i.e., testing is always carried out on the first partition. Therefore, the training set is
$\mathbf{X}_{\left( \left\{ 1\right\} ,m\right) }$; which, obviously, may or may not include an observation $x_{i}$. Define $I_{i}%
^{m}=I_{(\mathcal{K}_{m}(i)=1)}$, a 0-1 condition for checking whether an observation $x_{i}$ appears in the testing partition or not. We
did not need this condition for CVKR, because every MC trial includes $K$ different training and testing sets, and for every $x_{i}$
there is a corresponding test set $\mathbf{X}_{\left( \left\{ \mathcal{K}_{m}\left(i\right) \right\} ,m\right) }$ that does not include
it. It is clear that $\sum_{i}I_{i}^{m}=n_{K}$, the number of observations in any partition. The CVKM estimator can then be formally
defined as%
\begin{align}
  \widehat{Err}^{\left(  CVKM\right)  }=\frac{1}{n}\sum_{i}\left[  \left.\sum_{m}I_{i}^{m}Q\left(  x_{i},\mathbf{X}_{\left(  \left\{  1\right\},m\right)  }\right)  \right/  \sum_{m'}I_{i}^{m'}\right]  , \label{EqCVKM}
\end{align}
which means that for every observation we train once, then test on that observation only if it is not included in the training set. We
could have defined the MC estimator so that for every MC loop we train once and test on all of the observations in the testing set;
then we average over the MC trials. This partitioned variant should be defined as:%
\begin{align}
  \widehat{Err}^{\left(  CVKM\ast\right)  }  &  =\frac{1}{M}\sum_{m=1}^{M}\left[  \frac{1}{n_{K}}\sum_{i\in\mathcal{K}_{m}^{-1}\left(  1\right)}Q\left(  x_{i},\mathbf{X}_{\left(\{1\}, m\right)}\right)\right].\label{EqCVKM*a}
\end{align}
We show that (\APPENDIX) $\widehat{Err}^{\left( CVKM\right) } \neq \widehat{Err}^{\left( CVKM\ast\right) }$ under finite $M$, which is in
contrast to CVK and CVKR. However, asymptotically as $M\rightarrow \infty$, they are equal. It is worth mentioning that the two
variants are very similar (mathematically) to the two BS variants discussed in Section~\ref{sec:leave-one-out}.

\subsubsection{AUC}
With stratification, for each MC run there is only one partition from each class to test on. Following the same argument above, we
set the two testing partitions as $k_{1}=k_{2}=1$. The training sets are therefore $\mathbf{Xc}_{\left( \{1\},m\right) },\ c=1,2$. We
define, $I_{i}^{m}=I_{(\mathcal{K}_{1m}(i) = 1)}$ and $I_{j}^{m}=I_{(\mathcal{K}_{2m}(j)=1)}$. Then, the CVKM variant for the AUC is
defined as%
\begin{align}
  \widehat{AUC}^{\left(  CVKM\right)  } &=\frac{1}{n_{1}n_{2}}\sum_{j}\sum_{i}\left[  \left.  \sum_{m}I_{j}^{m}I_{i}^{m}\psi\left(h(x_{i}), h(y_{j})\right)\right/  \sum_{m'}I_{i}^{m'}I_{j}^{m'}\right],\label{eq:CVKM-AUC}
\end{align}
where $h = h_{\mathbf{X1}_{\left(\{1\},m\right)}\mathbf{X2}_{\left(\{1\},m\right)}}$. Similar to the treatment above of the error rate, we
could have defined the partitioned variant as%
\begin{align}
  \widehat{AUC}^{\left(  CVKM\ast\right)  }  &  =\frac{1}{M}\sum_{m=1}^{M}\left[  \frac{1}{n_{1K}n_{2K}}\sum_{j\in\mathcal{K}_{2m}^{-1}\left(1\right)  }\sum_{i\in\mathcal{K}_{1m}^{-1}\left(  1\right)  }\psi\left(h( x_{i}), h( y_{j})\right)\right],\label{eq:AUC-CVKM*}
\end{align}
where $h = h_{\mathbf{X1}_{\left(\{1\},m\right)}\mathbf{X2}_{\left(\{1\},m\right)}}$. We show that (\APPENDIX) the two variants are
equal only asymptotically, but not for finite $M$.

\subsection{Leave-One-Out Bootstrap (LOOB)}\label{sec:leave-one-out}
\subsubsection{Error rate}
The BS-based estimators was introduced in~\citep{Efron1983EstimatingTheError, Efron1993AnIntroduction, Efron1997ImprovementsOnCross},
and summarized in~\cite{Yousef2022MachineLearning}. The LOOB estimator for the error rate was defined as%
\begin{align}
  \widehat{Err}^{\left( 1\right) } & =\frac{1}{n}\sum_{i}\left[ \sum _{b}I_{i}^{b}Q\left( x_{i},\mathbf{X}^{\ast b}\right) \left/ \sum_{b'} I_{i}^{b'}\right.  \right],\label{EqLOOerr}
\end{align}%
where $I_{i}^{b}$ equals $1$ if the observation $x_{i}$ did not appear in the $b^{\text{th}}$ BS, and $0$ otherwise. This is to avoid the
bias occurring from training and testing on the same observations. The classifier is trained on the $b\textsuperscript{th}$ BS,
$\mathbf{X}^{\ast b}$, and tested on the observation $x_{i}$; the procedure is repeated for $B$ bootstraps. The partitioned variant was
defined as%
\begin{align}
  \widehat{Err}^{\left( \ast\right) } & =\frac{1}{B}\sum_{b}\left[ \sum _{i}I_{i}^{b}Q\left( x_{i},\mathbf{X}^{\ast b}\right) \left/ \sum_{i'} I_{i'}^{b}\right.  \right].\label{EqStarErr}
\end{align}
Both variants~\eqref{EqLOOerr} and~\eqref{EqStarErr} estimate the expectation (over the training and testing sets) of the error rate,
but with different summation order, to approximate the two empirical distributions of the BS replications and the left-out
observations. Both variants establish the parallelism with the CV variants of previous sections. However, it can be proven that
(\APPENDIX) (1) they are not equal either for finite bootstraps $B$ or asymptotically as $B\rightarrow\infty$. (2) The ratio between
the two estimators does not depend on the classifier; rather it depends on the sampling mechanism (exactly as was the case for the
CVKM).

\subsubsection{AUC}
The two variants of the BS estimator for the AUC, which are analogue to those of~\eqref{EqLOOerr} and~\eqref{EqStarErr} for the
error rate, were introduced and defined in our work~\cite{Yousef2004ComparisonOf, Yousef2005EstimatingThe} respectively. The history of
the theoretical development is summarized in~\cite{Yousef2022MachineLearning}. With stratification, we had defined the leave-pair-out
BS (LPOB) for the AUC as%
\begin{align}
  \widehat{AUC}^{\left(  1,1\right)  }  &  =\frac{1}{n_{1}n_{2}}\sum_{i}\sum_{j}\sum_{b}\frac{I_{i}^{b}I_{j}^{b} \psi\left(h(x_i), h(y_j)\right)}{\sum_{b'}I_{i}^{b'}I_{j}^{b'}},\label{eq:BS-AUC}
\end{align}%
where $h = h_{\mathbf{X}^{\ast b}}$. On the other hand, we had defined the partitioned variant for the AUC as%
\begin{align}
  \widehat{AUC}^{\left( \ast\right) } & =\frac{1}{B}\sum_{b}\left[ \sum _{j}\sum_{i}I_{i}^{b}I_{j}^{b}\psi\left( h( x_{i}) , h(y_{j}) \right) \left/ \sum_{j'}\sum_{i'}I_{i'}^{b}I_{j'}^{b}\right.\right],\label{eq:AUCstar}
\end{align}%
where $h= h_{\mathbf{X}^{\ast b}}$. We prove that (\APPENDIX) these two variants are not equivalent both asymptotically or under finite
$B$, exactly as their analogue of the error rate~\eqref{EqLOOerr} and~\eqref{EqStarErr}.

\subsubsection{Remarks}\label{sec:remarks}
Some concluding remarks are in order after the previous formalization. It has been proven that the two variants (the pooled and
partitioned) of each of CVK and CVKR are always equivalent; the two variants of the CVKM estimator are equivalent only asymptotically
as $M\rightarrow \infty$; and the two variants of the BS estimator are always different, even asymptotically as $B \rightarrow \infty$.
In addition, we prove that (\APPENDIX), as $M\rightarrow \infty$ the two variants of CVKM are equal to the CVKR. Said differently, The
CVKM and CVKR are equivalent asymptotically as $M\rightarrow\infty$, with a delayed conversion rate of the former.


  \section{Smoothness of Estimators}\label{sec:experimental-results}
\begin{figure}[t]\centering
  \includegraphics[height=2.5in]{./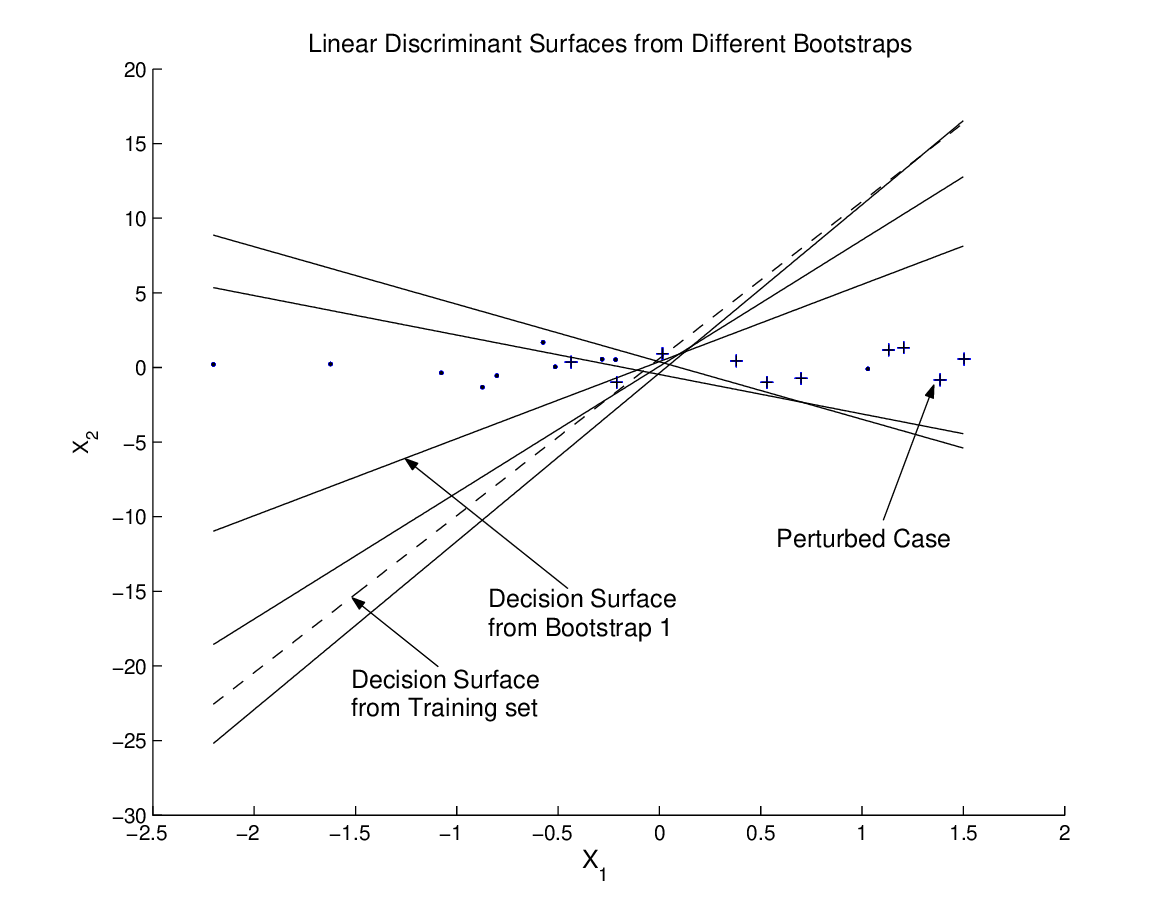}
  \caption{Different linear decision surfaces obtained by training on different BS replicates from the same training dataset.
    The first case from class 1 is chosen for perturbation. Changing a feature, e.g., $X_{1}$, has no change on the decision value of a
    single surface unless the case crosses that surface.}\label{fig12}
\end{figure}
\begin{figure}[t]\centering
  \includegraphics[height=2.5in]{./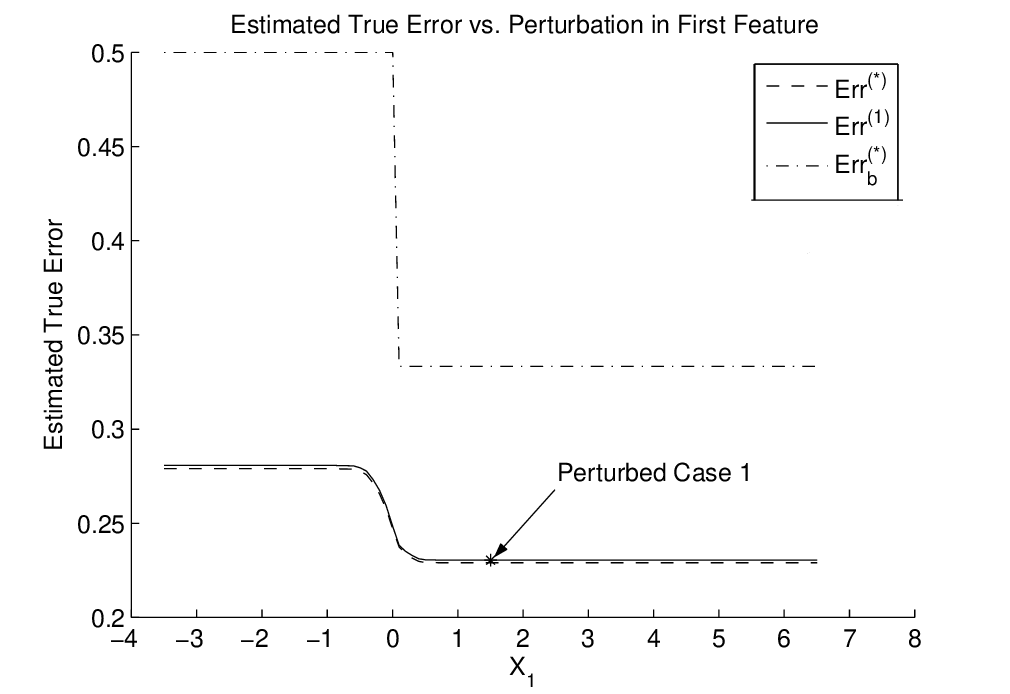}
  \caption{The two estimators $\widehat{\Err}^{_{\left( \ast\right) }}$, $\widehat{\Err}^{_{\left( 1\right) }}$, and the unsmooth component
    $\Err_{\mathbf{X}^{\ast b}}(\widehat {F}_{\varepsilon,i}^{_{\left( \ast\right) }})$ estimated after training on the first BS
    replicate. The estimated true error is plotted vs. change in the value of the first feature.}\label{fig13}
\end{figure}
In this section, we define theoretically the smoothness of an estimator, and explain experimentally the smoothness of the LOOB
estimator. Then we will be able to conclude which version or variant of the CV-based estimators is smooth and, hence, suitable for
estimating its standard error using the IF approach. The mathematics behind the current section is scattered in several places, which
we recently reviewed in~\cite{Yousef2022MachineLearning}, which is in turn summarized in the following paragraph for readers'
convenience. Consider the distribution $F$ of the data $\mathbf{X}$. The maximum likelihood estimator (MLE) of $F$ assigns a
probability of $1/n$ to each observation $x_i \in \mathbf{X}$, and is denoted $\hat{F}$. The BS replications $\mathbf{X}^{\ast}$ are
sampled from $\hat{F}$, and has a distribution $\hat{F}^{*}$. In addition, let $\hat {F}^{\left( \ast\right) }$ denote the distribution
of those observations in $\mathbf{X}$ that do not appear in the BS replication $\mathbf{X}^{\ast}$. The parenthesis notation $\left(
  \ast\right) $ refers to excluding from $\widehat{F}$, in the testing stage, the training cases $\mathbf{X}^{*}$ that were generated
by the BS replication. For better pedagogy, and WLOG, consider the error rate as the performance measure of interest rather than the
AUC. The variant $\widehat{\Err}^{_{\left( \ast\right) }}$ \eqref{EqStarErr} can be rewritten as%
\begin{subequations}\label{eq:Errstar}
  \begin{align}
    \widehat{\Err}^{_{\left(  \ast\right)  }} &= \MEAN_{\ast}\MEAN_{\hat{F}^{\left( \ast\right)  }}\left[  {Q\left( x_{i},\mathbf{X}^{\ast}\right)|\mathbf{X}^{\ast} }\right]\\
                                            &  =\MEAN_{\ast}{\Err_{\mathbf{X}^{\ast b}}(\widehat{F}^{\left( \ast\right)  })},
  \end{align}
\end{subequations}
where $\MEAN_{\ast}$ is approximated by the averaging $\frac{1}{B}\sum_b$ over a finite number of $B$ bootstraps, and
$\Err_{\mathbf{X}^{\ast b}}(\widehat{F}^{\left( \ast\right) })$ denotes the whole inner bracket of~\eqref{EqStarErr}. Applying the IF
approach to estimate the variance requires perturbing each observation $x_i$, with a probability measure $\epsilon$, to produce a new
distribution $\widehat{F} _{\varepsilon,i}$ than $\widehat{F}$, and study the effect of this variation on the estimator. This
probability perturbation of course propagates through to the probability masses of the BS replicates as well. Under the new
distribution $\widehat{F} _{\varepsilon,i}$, it can be shown that~\citep{Efron1992JackknifeAfter} the $b^{\text{th}}$ BS includes the
case $x_i$, $N_{i}^{b}$ times, with probability:%
\begin{equation}
  g_{\varepsilon,i}(b) =(1-\varepsilon)^{n}(1+\frac{n\varepsilon}{1-\varepsilon})^{N_{i}^{b}}(1/n)^{n}.\label{eq123}
\end{equation}
Then, the estimator $\widehat{\Err}^{_{\left( \ast\right) }}$, after perturbation, is evaluated as%
\begin{equation}
  \widehat{\Err}^{_{\left(  \ast\right)  }}(\widehat{F} _{\varepsilon,i}) =\sum\nolimits_{b}{g_{\varepsilon,i}(b)\,\Err_{\mathbf{X} ^{\ast b}}(\widehat{F}_{\varepsilon,i}^{\left(  \ast\right)  })}.\label{eq:Eq33PRL}
\end{equation}
The reader should note that if there is no perturbation, i.e., $\varepsilon=0$, then $\widehat{F} _{\varepsilon,i}$ is reduced to
$\widehat{F}$, $g_{\varepsilon,i}(b)$ is reduced to $(1/n)^n$, which is the probability of each BS replication, and (\ref{eq:Eq33PRL})
is reduced to the original averaging~\eqref{EqStarErr} over the bootstraps. Calculating the variance of $\widehat{\Err}^{_{\left(
      \ast\right) }}$ using the IF requires calculating the derivative $\partial \widehat{\Err}^{_{\left( \ast\right) }}(\widehat{F}
_{\varepsilon,i})/\partial\varepsilon$. If this derivative does not exist, i.e., a small change in an observation does not result in a
small change in the statistic, the latter is not ``smooth'', and hence its variance is not estimable using the IF approach.
Unfortunately, the inner component $\Err_{\mathbf{X} ^{\ast b}}(\widehat{F}_{\varepsilon,i}^{\left( \ast\right) })$ is not
differentiable, which will be explained by the following experiment.

Consider the very simple case where there are just two features and the classifier is the LDA, where the decision surface will be a
straight line in $\mathbf{R}^2$. \figurename~\ref{fig12} illustrates a sample generated from the two classes, together with the
decision surface obtained by training on this sample, along with the decision surfaces obtained from training on five BS replicates
(one at a time). Each decision surface trained on a BS replicate $\mathbf{X}^{\ast b}$, and tested on the sample cases not included in
the training, produces an estimate $\Err_{\mathbf{X}^{\ast b}}(\hat {F}_{\varepsilon,i}^{\left( \ast\right) })$, which is clearly
unsmooth. This is because the estimate does not change with a change in a feature value, e.g., $X_{1}$, unless this change allows the
observation to cross the decision surface.

On the contrary, the LOOB (\ref{EqLOOerr}), $\widehat{\Err}^{_{\left( 1\right) }}$, has an inner summation that is a smooth function.
This is so since any change in a sample case will cross many BS-based decision surfaces (some extreme violations to this fact may occur
under particular classifiers). For more illustration, with $B=1000$, the behaviour of $\widehat{\Err}^{_{\left( 1\right) }}$,
$\widehat{\Err}^{_{\left( \ast\right) }}$, and its inner component $\Err_{\mathbf{X}^{\ast
    b}}(\widehat{F}_{\varepsilon,i}^{\left(\ast\right) })$ are shown in~\figurename~\ref{fig13}. Therefore, although
$\widehat{\Err}^{_{\left( 1\right) }}$ and $\widehat{\Err}^{_{\left( \ast\right) }}$ almost give the same estimated value and both look
smooth, the former, in contrast to the latter, has a smooth inner summation, which makes it suitable for using the differential
operator of the influence function. To recap, training on many datasets (the BS replicates of the BS-based estimators) resulted in many
decision surfaces. Averaging the output of these decision surfaces, for each observation, results in a smooth and differentiable
estimator.

\medskip

Replacing the averaging of the BS replicates by the averaging of the CV partitions allows us immediately to conclude which CV-based
estimators are smooth and differentiable. It is clear that only the pooled variants~\eqref{EqCVKRa} of CVKR and~\eqref{EqCVKM} of CVKM
are the analogue to $\widehat{\Err}^{_{\left( 1\right) }}$. Therefore, for the same reason explained above, they are smooth and
differentiable, and hence eligible for estimating their variance using the IF approach. On the other hand, all the CV partitioned
variants (those denoted by $(*)$) are the analogue to $\widehat{\Err}^{_{(*) }}$, and therefore they are not differentiable. In
addition, it is also clear that, LOOCV and CVK are neither smooth nor differentiable because each observation is used for testing by
only a single trained classifier.




  \section{More Properties of Estimators}\label{sec:theoritical-critique}
In addition to the properties discussed in the previous two sections, we discuss here more properties in the light of some known
results in the literature (all the proofs are deferred to the~\APPENDIX).

Although all estimators train on the same training set of size $n$, because of their different resampling mechanisms, the ``effective''
training set size is different. It is clear that all CV versions train on exactly $n-n/K$ observations in each training iteration,
including the CVN with $K=n$. In addition they all give the same chance to all observations to appear in the training sets (including
an outlier that no other observation in the training set is close to it). On the contrary, the BS-based estimators sample, with
replacement, from the MLE distribution. This gives more chance to close-to-each-other observations to appear in the bootstraps than
other distant-from-each-other observations, which mimics the true distribution. It is theoretically argued that the effective number of
observations included in the BS replicates is $.632n$, which introduces more bias to the estimator than CV. Experimentally, it appears
that the bootstrap is supported on almost half of the observations, i.e., only $0.5n$. We show that if we look at the bootstrap
resampling mechanism as sampling with replacement without ordering, the BS will be indeed supported on almost $0.5n$
(Corollary~\ref{COR632or5}).

We next compare the pooled variant of all estimators, CVN, CVK, CVKR, CVKM, LOOB,
i.e.,~\eqref{eq:ErrCVN},~\eqref{EQCVK},~\eqref{EqCVKRa},~\eqref{EqCVKM},~\eqref{EqLOOerr} for error rate, or their counterparts for AUC
(all these estimators are listed side-by-side in the~\APPENDIX~for readers' convenience). CVN and CVK are the only versions that do not
average over training sets for every observation. E.g., the error from testing on $x_{i}$ is produced solely by training once on
$\mathbf{X}_{\left( i\right) }$ or $\mathbf{X}_{\left( \{\mathcal{K}\left( i\right) \}\right) }$, respectively. Hence, they can be seen
as if they were designed to estimate the conditional performance $Err_{\mathbf{X}} = {\E_o}Q\left(x_o,\mathbf{X}\right)$ of the
classification rule; i.e., the performance conditional on the training set $\mathbf{X}$. However, CVKR and LOOB differ from CVN and CVK
in that both average over many training sets (MC or bootstraps, respectively) to produce one error estimate per observation, i.e.,
$\left. \sum_{m}Q\left( x_{i},\mathbf{X}_{\left( \left\{ \mathcal{K}_{m}\left( i\right) \right\} ,m\right) }\right) \right/ M$, and
$\left. \sum_{b}I_{i}^{b}Q\left( x_{i},\mathbf{X}^{\ast b}\right) \right/ \sum_{b}I_{i}^{b}$ respectively. Hence, they can be seen as
if they are designed to estimate the unconditional performance $Err =
{\E_{\mathbf{X}}}Err_{\mathbf{X}}={\E_o}{\E_{\mathbf{X}}}Q\left(x_o,\mathbf{X}\right)$ of the classification rule; i.e., the expected
performance, where the expectation is taken over the population of training sets of the same size $n$. Therefore,
\cite{Efron1997ImprovementsOnCross} call LOOB, and for the same reason we can call each of CVKR and CVKM, a smooth version of CVN.
Although both CVN and CVK look, in theory, like estimating the conditional performance, whereas the CVKR (or the slower estimator CVKM)
and LOOB look like estimating the mean performance, all in practice do estimate the expected performance! This is a consequence of the
known phenomenon of ``weak correlation'' that was reported and analyzed several years ago,
e.g.,~\citep{Efron1983EstimatingTheError,Efron1997ImprovementsOnCross,Zhang1995AssessingPrediction}. We provide below a qualitative
interpretation for this phenomenon that helps practitioners to understand this phenomenon. Denote the true performance of the
classification rule, conditional on the training set $\mathbf{X}$ (whether $Err_{\mathbf{X}}$ or $AUC_{\mathbf{X}}$), by
$S_{\mathbf{X}}$, the unconditional performance by $\E_{\mathbf{X}}S_{\mathbf{X}}$, and an estimator of either of them by
$\widehat{S}_{\mathbf{X}}$. For easier notation we can unambiguously drop the subscript $\mathbf{X}$ and it is then straightforward to
decompose the MSE, normalized by the standard deviations, as:%
\begin{align}
  \frac{\MSE\left(\widehat{S}, S\right)}{\sigma_S \sigma_{\widehat{S}}} &= \frac{\MSE\left(\widehat{S}, \E S\right)}{\sigma_S \sigma_{\widehat{S}}} + \frac{\sigma_S}{\sigma_{\widehat{S}}} - 2\rho_{\widehat{S}S}.\label{eq:14}
\end{align}
This equation relates four crucial components to each other:
\begin{itemize}[partopsep=0in,parsep=0in,topsep=0.1in,itemsep=0.0in,leftmargin=0.2in]
\item $\MSE(\widehat{S}, S)\bigl/\sigma_S \sigma_{\widehat{S}}$, the normalized MSE of $\widehat{S}$, if we see it as an estimator of the
  conditional performance $S$.

\item $\MSE(\widehat{S}, \E S)\bigl/\sigma_S \sigma_{\widehat{S}}$, the normalized MSE of $\widehat{S}$, if we see it as an estimator of
  the expected performance $\E S$ (and therefore called MSE around the mean).

\item $\sigma_S \bigl/ \sigma_{\widehat{S}}$, the standard deviation ratio between $S$ and $\widehat{S}$.

\item $\rho_{\widehat{S}S}$, the correlation coefficient between $S$ and $\widehat{S}$.
\end{itemize}
From~\eqref{eq:14}, an estimator $\widehat{S}$ is a good candidate to estimate $S$ than $\E S$ if its $\MSE\left(\widehat{S}, S\right)$
is less than its $\MSE\left(\widehat{S}, \E S\right)$. Then, it is the responsibility of the correlation coefficient
$\rho_{\widehat{S}S}$ to be high enough to cancel $\sigma_S \bigl/ \sigma_{\widehat{S}}$ and a portion of $\MSE\left(\widehat{S}, \E
  S\right)$. Unfortunately, this is not the case in almost all experiments reported in the literature. We provide our illustrative
experiments in the~\APPENDIX.

It is important to emphasize that these estimators were long standing in the literature and have been numerically compared in some
publications, including
\cite{Efron1997ImprovementsOnCross,Sahiner2001ResamplingSchemes,Yousef2005EstimatingThe,Sahiner2008ClassifierPerformance}. Conducting a
new comprehensive comparative study, which considers new distributions and real datasets than those considered in the literature, is
quite important to conclude the overall winner, in terms of the MSE.


  \section{Conclusion and Future Work}\label{sec:conclusion}
We analyzed different versions (and different variants of each version) of CV-based estimators that estimate either the error rate or
the AUC of a classification rule. In addition, we analyzed the BS-based estimator because of its strong connection to the CV and its
known smoothness property that allowed, in the past, using the IF approach for estimating its variance. We can summarize the findings
of the present articles as a set of properties of these estimators as follows: (1) The CVN has only one variant, and the variants of
CVK and CVKR are equivalent. The two variants of CVKM are equivalent to each other, and interestingly to the CVKR as well, only
asymptotically with the number of MC trials (i.e., as $M \rightarrow \infty$). However, the two variants of the BS estimator are not
equivalent under either finite or infinite number of BS replicates $B$. (2) We proved mathematically and illustrated experimental that
the only smooth and differentiable versions of the CV-based estimators are the pooled variants of the CVKR and CVKM, which allows
estimating their variance using the IF approach. (3) In addition, some miscellaneous remarks that connect the present article to some
already-known results in the literature are in order. Although all of these estimators train on a similar size of training dataset, the
effective training set size (because of different sampling mechanisms) are different, which contributes to a different bias to each
estimator. It was clear that both CVN and CVK look like estimating the conditional performance of a classification rule; whereas,
similar to the BS estimator, both CVKR and CVKM look like estimating the mean (over different training sets) performance of the
classification rule. However, in terms of MSE, all estimators in practice almost estimate the mean performance due to the known
phenomenon of weak correlation between each of these estimators and the conditional performance.

In retrospect, to be able to provide a final advice for practitioners, two future works are still needed, which is specially true after
the new era of deep learning and the very complex classifiers they produce: (1) applying the IF approach to either the CVKR or CVKM to
estimate their variance, as was done very successfully for the BS-based estimators, and (2) conducting a very comprehensive
computational study, which supports many distributions, real datasets, classifier architectures, etc., to compare the relative
accuracies of these estimators.


  \section{Acknowledgment}\label{sec:acknoledgment}
The author is grateful to the U.S. Food and Drug Administration (FDA) for funding an earlier stage of this project, circa 2008; and to
both, Brandon Gallas and Weijie Chen, of the FDA, for reviewing an earlier stage of this manuscript when it was an internal report; and
to Kyle Myers, for her continuous support. Special thanks and gratitude, in his memorial, to Dr. Robert F. Wagner the supervisor and
teacher, or Bob Wagner the big brother and friend, who passed away before seeing this manuscript published.


  \small
  \let\oldbibliography\thebibliography
  \renewcommand{\thebibliography}[1]{%
  \oldbibliography{#1}%
  \setlength{\itemsep}{0pt}%
  }
  \putbib[booksIhave,publications]
\end{bibunit}

\clearpage
 
\begin{bibunit}[model2-names]
  \normalfont
  \onecolumn
  \section{Appendix}\label{sec:appendix}
\subsection{History}
The early motivation behind our present article started since our contribution in the MAQC project, which was formally launched by
scientists at the FDA's National Center for Toxicological Research (NCTR), Jefferson, Arkansas, in response to the FDA Critical Path
Initiative. MAQC was a group of over 100 members from industry, academia, and US government working on methods for developing
predictive models that use high-dimensional microarray (DNA chips) data to classify patients into low- or high-risk with respect to
getting a specified kind of cancer. In particular, phase II of the MAQC project (MAQCII) aimed at providing the best practices to
design DNA microarray classification methods. This is, in many cases, an ill-posed problem since the dimensionality (number of genes)
is in thousands whereas the number of observations (patients) is in hundreds. Classifiers designed for such a problem can easily suffer
from large variability (unstability); hence they may not generalize, and results obtained from the studies are fragile. Our main
conclusion of the project was published in Nature biotechnology \citep{Shi2010MAQCII}, in addition to other several publications each
was an in-depth treatment of a technical aspect of the project. One of these publications is \cite{Chen2012UncertEst}, in which we
discussed the uncertainty (standard error) estimation associated with the assessment of classification models under the scarcity of
data, where we applied the techniques developed earlier in \cite{Yousef2005EstimatingThe, Yousef2006AssessClass} to assess classifiers
in two different paradigms. In Paradigm I \citep{Yousef2005EstimatingThe}, the classifier is assessed from only one available dataset,
where resampling must take place to train and test on two different datasets in each resampling iteration. However, the resampling
approach is a ``smooth'' version of the bootstrap (BS) rather than the Cross Validation (CV); therefore, the variance of the former can
be estimated almost unbiasedly using the Influence Function (IF) approach derived in \citep{Efron1997ImprovementsOnCross} for the error
rate and extended in \cite{Yousef2005EstimatingThe} for the AUC. This is as opposed to the CV estimators where only ad-hoc estimators
of their standard error are available that ignore the covariance structure among folds, and no unbiased estimator of their variance
exists \citep{Bengio2004NoUnbiasedEstKCV}. Paradigm II, that was proposed by \cite{Yousef2006AssessClass} and extended later in
\cite{Chen2012ClassVar}, assumes that the assessment must be carried over two independent and sequestered datasets, one for training
and one for testing. This protocol is mandatory in some regulatory agencies, e.g., the FDA. In this paradigm, the performance estimate
for either the error rate or the AUC, along with their variance estimation, are obtained via the Uniform Minimum Variance Unbiased
Estimator (UMVUE) derived from the $U$-statistic theory~\citep{Randles1979IntroductionTo}. Part of the scientific debate, in particular
among colleagues of the project that time, or in general in the whole field, is on the assessment strategy and the several versions of
estimators that practitioners do (or should) use under paradigm I, and which estimator is ``better'' and in which ``sense''?

\subsection{Experimental Results}\label{sec:experimental-restuls}
\begin{figure}[bh]\centering
  \includegraphics[width=3in]{./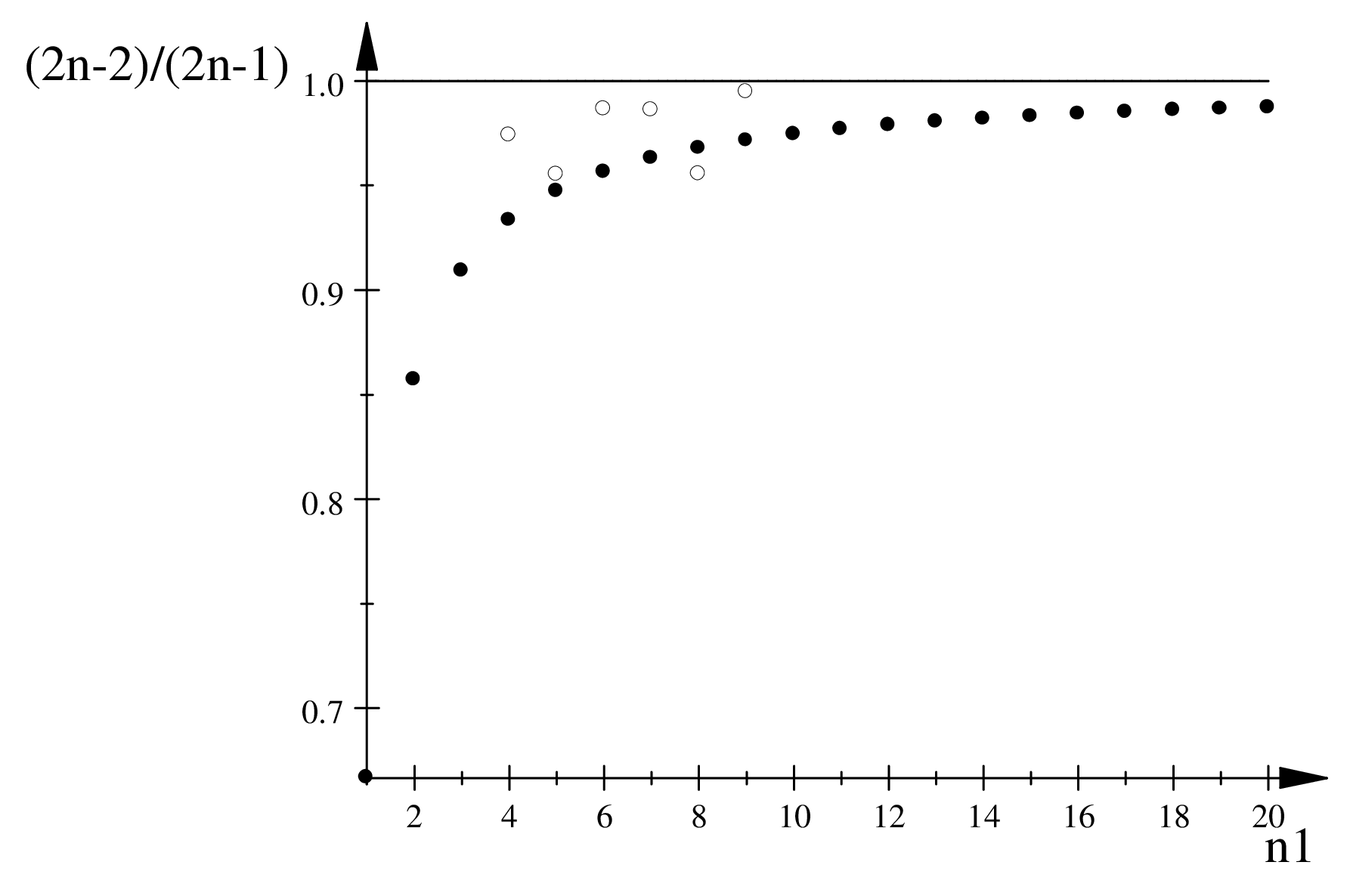}
  \caption{The ratio between the two variants of the BS vs. sample size}\label{FIGratio}
\end{figure}
The ratio between the two estimators, $\frac{2n-2}{2n-1}$, is plotted in Figure~\ref{FIGratio} vs. $n_{1}~(=n_{2})$, where
$n~(=n_{1}+n_{2})$ is the number of observations in the two classes and $n_{1}$ and $n_{2}$ are the number of observations in each
class. Then, the two estimators are identical only for large $n$. The white circles in the figure represent the ratio between the two
estimators for LDA classifier, using single dimensionality, 0 and 1 means, and unit variance. The number of bootstraps is $200$. There
is difference between the theoretical result and the simulation since the simulation assumes sampling with replacement \textbf{with
  ordering}.
\begin{table}[t]\centering
  \small
  \caption{A sample from a large set of experiments that shows true performance $S$, apparent performance on the same training set
    $\overline{S}$, and estimator $\widehat{S}$ from equations \eqref{eq:6} or \eqref{eq:7}. (The typical values in this table are for
    $S = AUC_{\mathbf{X}}$ and $\widehat{S} = \widehat{AUC}_{\mathbf{X}}^{(*)}$).}\label{tab1}
  \resizebox{0.5\columnwidth}{!}{\begin{tabular}{clllllr}
  \toprule
  $\widehat{S}$ & $\E$ & $\sigma$ & $\RMS(\widehat{S}, S)$ & $\RMS(\widehat{S}, \E S)$ & $\rho_{\widehat{S}, S}$ & $n$\\
  \midrule
  \multicolumn{1}{l}{$S$} & 0.6181 & 0.0434 & 0 & 0.0434 & 1.0000 & \\
  \multicolumn{1}{l}{$\overline{S}$} & 0.8897 & 0.0475 & 0.2774 & 0.2757 & 0.2231 & 20\\
  \multicolumn{1}{l}{$\widehat{S}$} & 0.5914 & 0.0947 & 0.0973 & 0.0984 & 0.2553 & \\\hline

  \multicolumn{1}{l}{$S$} & 0.6231 & 0.0410 & 0 & 0.0410 & 1.0000 & \\
  \multicolumn{1}{l}{$\overline{S}$} & 0.8788 & 0.0499 & 0.2615 & 0.2606 & 0.2991 & 22\\
  \multicolumn{1}{l}{$\widehat{S}$} & 0.5945 & 0.0947 & 0.0956 & 0.0990 & 0.2993 & \\\hline

  \multicolumn{1}{l}{$S$} & 0.6308 & 0.0400 & 0 & 0.0400 & 1.0000 & \\
  \multicolumn{1}{l}{$\overline{S}$} & 0.8656 & 0.0471 & 0.2406 & 0.2395 & 0.2833 & 25\\
  \multicolumn{1}{l}{$\widehat{S}$} & 0.5991 & 0.0865 & 0.0897 & 0.0922 & 0.2946 & \\\hline

  \multicolumn{1}{l}{$S$} & 0.6359 & 0.0358 & 0 & 0.0358 & 1.0000 & \\
  \multicolumn{1}{l}{$\overline{S}$} & 0.8554 & 0.0472 & 0.2253 & 0.2246 & 0.2747 & 28\\
  \multicolumn{1}{l}{$\widehat{S}$} & 0.6035 & 0.0840 & 0.0874 & 0.0901 & 0.2904 & \\\hline

  \multicolumn{1}{l}{$S$} & 0.6469 & 0.0343 & 0 & 0.0343 & 1.0000 & \\
  \multicolumn{1}{l}{$\overline{S}$} & 0.8419 & 0.0439 & 0.2010 & 0.1999 & 0.2434 & 33\\
  \multicolumn{1}{l}{$\widehat{S}$} & 0.6170 & 0.0750 & 0.0792 & 0.0807 & 0.2746 & \\\hline

  \multicolumn{1}{l}{$S$} & 0.6571 & 0.0308 & 0 & 0.0308 & 1.0000 & \\
  \multicolumn{1}{l}{$\overline{S}$} & 0.8246 & 0.0431 & 0.1735 & 0.1730 & 0.2923 & 40\\
  \multicolumn{1}{l}{$\widehat{S}$} & 0.6244 & 0.0711 & 0.0753 & 0.0783 & 0.3185 & \\\hline

  \multicolumn{1}{l}{$S$} & 0.6674 & 0.0271 & 0 & 0.0271 & 1.0000 & \\
  \multicolumn{1}{l}{$\overline{S}$} & 0.8091 & 0.0406 & 0.1473 & 0.1474 & 0.3517 & 50\\
  \multicolumn{1}{l}{$\widehat{S}$} & 0.6357 & 0.0654 & 0.0690 & 0.0727 & 0.3534 & \\\hline

  \multicolumn{1}{l}{$S$} & 0.6808 & 0.0217 & 0 & 0.0217 & 1.0000 & \\
  \multicolumn{1}{l}{$\overline{S}$} & 0.7946 & 0.0355 & 0.1195 & 0.1192 & 0.2499 & 66\\
  \multicolumn{1}{l}{$\widehat{S}$} & 0.6533 & 0.0546 & 0.0602 & 0.0611 & 0.2451 & \\\hline

  \multicolumn{1}{l}{$S$} & 0.6965 & 0.0158 & 0 & 0.0158 & 1.0000 & \\
  \multicolumn{1}{l}{$\overline{S}$} & 0.7772 & 0.0312 & 0.0860 & 0.0866 & 0.3596 & 100\\
  \multicolumn{1}{l}{$\widehat{S}$} & 0.6738 & 0.0454 & 0.0483 & 0.0507 & 0.3422 & \\\hline

  \multicolumn{1}{l}{$S$} & 0.7141 & 0.0090 & 0 & 0.0090 &1.0000 & \\
  \multicolumn{1}{l}{$\overline{S}$} & 0.7573 & 0.0228 & 0.0487 & 0.0489 & 0.2277 & 200\\
  \multicolumn{1}{l}{$\widehat{S}$} & 0.6991 & 0.0298 & 0.0327 & 0.0334 & 0.2288 & \\
  \bottomrule
\end{tabular}

}
\end{table}
We next elaborate experimentally on the concept of weak correlation discussed by a simple toy example. We generate a five dimensional
($p=5$) multinormal 2-class data with identity covariance matrices with mean vectors of $\mathbf{0}$ and $c \mathbf{1}$. It is known
that the Mahalanobis distance between the two classes, defined as $\Delta=\left[
  (\mu_{1}-\mu_{2})'\Sigma^{-1}(\mu_{1}-\mu_{2})\right]^{1/2}$, is given by $\Delta = c^{2}p$ for the multinormal distributions.
Therefore, we adjust $c$ to keep a reasonable inter-class separation of $\Delta = 0.8$. When the classifier is trained, it will be
tested on a pseudo-infinite test set, here 1000 cases per class, to obtain a very good approximation to the true AUC for the classifier
trained on this very training data set. This is called a single realization or a Monte-Carlo (MC) trial. Many realizations of the
training data sets with same $n$ are generated over MC simulation to study the mean and variance of the AUC for the Bayes classifier
under this training set size. The number of MC trials used is 1000. \tablename~\ref{tab1} provides all quantities of the
decomposition~\eqref{eq:14} of these experiments under different values of training sample size $n$. It is obvious from the values that
$\RMS(\widehat{S}, S)$ and $\RMS(\widehat{S}, \E S)$ are very close to each other because the quantity $\sigma_S \bigl/
\sigma_{\widehat{S}} - 2\rho_{\widehat{S}S} \simeq 0.413 - 2 \times 0.290 = -0.167$ (on average over the 10 experiments shown in the
table); and in some cases, e.g., the first experiment, it goes as low as $-0.052$. For short, the correlation between $\widehat{S}$ and
$S$ is weak to cast $\widehat{S}$ as an estimate to $S$ even though it is designed to estimate it!

\subsection{Lemmas and Proofs}\label{sec:lemmas-proofs}
The pooled variant of each error rate estimator (CVN~\eqref{eq:ErrCVN}, CVK~\eqref{EQCVK}, CVKR~\eqref{EqCVKRa}, CVKM~\eqref{EqCVKM},
and LOOB~\eqref{EqLOOerr}), are re-listed below respectively:%
\begin{subequations}\label{eq:6}
  \begin{align}
    \widehat{Err}^{\left( CVN\right) } & =\frac{1}{n}\sum_{i=1}^{n}\, \Biggl[ Q\left( x_{i},\mathbf{X}_{\left( i\right) }\right) \Biggr].\label{eq:1}\\
    \widehat{Err}^{\left( CVK\right) } & =\frac{1}{n}\sum_{i=1}^{n}\, \Biggl[Q\left( x_{i},\mathbf{X}_{\left( \{\mathcal{K}\left( i\right) \}\right) }\right) \Biggr].\label{eq:2}\\
    \widehat{Err}^{\left( CVKR\right) } & =\frac{1}{n}\sum_{i=1}^{n}\left[ \left.  \sum_{m}Q\left( x_{i},\mathbf{X}_{\left( \left\{ \mathcal{K}_{m}\left( i\right) \right\} ,m\right) }\right) \right/ M\right].\label{eq:3}\\
    \widehat{Err}^{\left( CVKM\right) } & =\frac{1}{n}\sum_{i=1}^{n}\left[ \left.  \sum_{m}I_{i}^{m}Q\left( x_{i},\mathbf{X}_{\left( \left\{ 1\right\} ,m\right) }\right) \right/ \sum_{m}I_{i}^{m}\right].\label{eq:4}\\
    \widehat{Err}^{\left(  1\right)  }  &  =\frac{1}{n}\sum_{i=1}^{n}\left[\left.\sum_{b}I_{i}^{b}Q\left(  x_{i},\mathbf{X}^{\ast b}\right) \right/ \sum_{b}I_{i}^{b}\right].\label{eq:5}
  \end{align}
\end{subequations}

The pooled variant of each of AUC estimators (CVN \eqref{eq:AUCCVN}, CVK~\eqref{eq:CVK-AUC}, CVKR~\eqref{eq:CVKR-AUC},
CVKM~\eqref{eq:CVKM-AUC}, and LOOB~\eqref{eq:BS-AUC}), are listed below respectively.%
\begin{subequations}\label{eq:7}
  \begin{align}
    \widehat{AUC}^{\left( CVN\right) } & =\frac{1}{n_{1}n_{2}}\sum_{j=1} ^{n_{2}}\sum_{i=1}^{n_{1}}\, \Biggl[ \psi\left( h(x_i), h(y_j)\right)\Biggr], &&h= h_{\mathbf{X1}_{\left( i\right) }\mathbf{X2}_{\left( j\right) }}.\label{eq:8}\\
    \widehat{AUC}^{\left( CVK\right) } & =\frac{1}{n_{1}n_{2}}\sum_{j=1} ^{n_{2}}\sum_{i=1}^{n_{1}}\, \Biggl[\psi\left( h(x_i), h(y_j)\right) \Biggr], && h= h_{\mathbf{X1}_{\left( \left\{ \mathcal{K}_{1}(i)\right\} \right) }\mathbf{X2}_{\left( \left\{ \mathcal{K}_{2}(j)\right\} \right) }} .\label{eq:9}\\
    \widehat{AUC}^{\left( CVKR\right) } & =\frac{1}{n_{1}n_{2}}\sum _{j=1}^{n_{2}}\sum_{i=1}^{n_{1}}\left[ \left.  \sum_{m}\psi\left( h(x_i), h(y_j)\right) \right/ M\right], && h= h_{\mathbf{X1}_{\left( \left\{ \mathcal{K}_{1m}(i)\right\} \right) }\mathbf{X2}_{\left( \left\{ \mathcal{K}_{2m}(j)\right\} \right) }} .\label{eq:10}\\
    \widehat{AUC}^{\left(  CVKM\right)  } &  =\frac{1}{n_{1}n_{2}}\sum_{j=1}^{n_{2}}\sum_{i=1}^{n_{1}}\left[  \left.  \sum_{m}I_{j}^{m}I_{i}^{m}\psi\left( h(x_i), h(y_j)\right)  \right/  \sum_{m}I_{i}^{m}I_{j}^{m}\right], && h =   h_{\mathbf{X1}_{\left(\{1\},m\right)}\mathbf{X2}_{\left(  \left\{  1\right\},m\right)  }}.\label{eq:11}\\
    \widehat{AUC}^{\left( 1,1\right) } & =\frac{1}{n_{1}n_{2}}\sum_{j=1} ^{n_{2}}\sum_{i=1}^{n_{1}}\left[ \left.  \sum_{b}I_{i}^{b}I_{j}^{b}\psi\left( h(x_i), h(y_j)\right) \right/ \sum_{b}I_{i}^{b}I_{j}^{b}\right], && h= h_{\mathbf{X}^{\ast b}}.\label{eq:12}
  \end{align}
\end{subequations}

\begin{lemma}
  \label{LEMIiXiConvergence}Consider the sequence of r.v. $\left\{ X_{i}\right\} $, where $\sum_{1}^{n}\frac{x_{i}}{n}\overset{a.s.}%
  {\rightarrow}S$, and $S$, in general, is a r.v.. Consider, as well, both, the sequence $\left\{ I_{i}\right\} $ of independent events,
  where $I_{i}\sim Ber(p)$, and the sequence $\left\{ a_{i}\right\} $ of discrete i.i.d. r.v. with mean $\operatorname*{E}\left[
    a_{1}\right] $. Then,
  \begin{enumerate}
    \Item
    \begin{align}
      \sum_{1}^{n}\frac{I_{i}x_{i}}{\Sigma_{i}I_{i}}\overset{a.s.}{\rightarrow}S.
    \end{align}

    \Item
    \begin{align}
      \sum_{1}^{n}\frac{I_{i}x_{i}}{n}\overset{a.s.}{\rightarrow}pS.
    \end{align}

    \Item
    \begin{align}
      \sum_{1}^{n}\frac{a_{i}x_{i}}{n}\overset{a.s.}{\rightarrow}\operatorname*{E}%
      \left[  a_{1}\right]  S.
    \end{align}
  \end{enumerate}
\end{lemma}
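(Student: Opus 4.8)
The plan is to prove the most general statement, part~(3), first, and then recover parts~(2) and~(1) as corollaries. Part~(2) is simply the special case $a_i = I_i$ of part~(3): the Bernoulli$(p)$ indicators $\{I_i\}$ are i.i.d.\ discrete random variables with mean $\E[I_1]=p$ and finite variance $p(1-p)$. For part~(1) I would rewrite the quotient as
\begin{align}
  \sum_{i=1}^{n}\frac{I_i x_i}{\sum_i I_i} = \frac{\tfrac1n\sum_{i=1}^n I_i x_i}{\tfrac1n\sum_{i=1}^n I_i},
\end{align}
observe that the numerator converges a.s.\ to $pS$ by part~(2), that the denominator converges a.s.\ to $p$ by the strong law of large numbers (equivalently part~(2) applied with $x_i\equiv1$, $S\equiv1$), and conclude, since $p>0$, that the ratio of the almost-sure limits equals $pS/p = S$. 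Note that $\sum_i I_i\to\infty$ a.s., so the quotient is eventually well defined.

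The substance is therefore part~(3). The key step is the decomposition
\begin{align}
  \frac1n\sum_{i=1}^n a_i x_i = \E[a_1]\cdot\frac1n\sum_{i=1}^n x_i + \frac1n\sum_{i=1}^n\bigl(a_i-\E[a_1]\bigr)x_i.
\end{align}
The first term converges a.s.\ to $\E[a_1]\,S$ by hypothesis, so everything reduces to showing the centered remainder $\tfrac1n\sum_i\bigl(a_i-\E[a_1]\bigr)x_i\to 0$ a.s. Here I would use that the randomization $\{a_i\}$ is independent of $\{x_i\}$, and that the $x_i$ are bounded (as holds in the application, where $Q$ and $\psi$ are valued in $[0,1]$). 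Conditioning on a realization $\{x_i\}=\{\xi_i\}$ — for a.e.\ realization the limit $s=\lim_n\tfrac1n\sum_i\xi_i$ exists by hypothesis — the summands $\bigl(a_i-\E[a_1]\bigr)\xi_i$ are independent and mean zero, and satisfy Kolmogorov's summability criterion $\sum_i\Var\!\bigl((a_i-\E[a_1])\xi_i\bigr)/i^2 = \Var(a_1)\sum_i \xi_i^2/i^2<\infty$ because $\{\xi_i\}$ is bounded. Kolmogorov's SLLN (via Kronecker's lemma) then gives $\tfrac1n\sum_i\bigl(a_i-\E[a_1]\bigr)\xi_i\to0$ for a.e.\ $\{\xi_i\}$, whence the conditioned weighted average converges to $\E[a_1]\,s$. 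De-conditioning by Fubini (the conditional probability of the convergence event is $1$ for a.e.\ realization, so the unconditional probability is $1$) yields part~(3).

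The main obstacle is precisely that centered remainder, and the difficulty is the interplay between the randomness of the weights $\{a_i\}$ and that of the values $\{x_i\}$: the hypothesis controls only the Cesàro mean of $\{x_i\}$, not the individual $x_i$, so I cannot invoke an ordinary SLLN for the products $a_i x_i$ directly. The conditioning argument is what disentangles the two sources of randomness, turning the statement into a weighted SLLN with fixed (bounded) coefficients $\xi_i$, for which boundedness of $x_i$ together with $\Var(a_1)<\infty$ are exactly what validate Kolmogorov's condition. I would therefore make explicit the two standing assumptions that the statement leaves implicit but that the argument cannot do without: independence of $\{a_i\}$ (and $\{I_i\}$) from $\{x_i\}$, and boundedness of $\{x_i\}$ (or, more generally, a moment condition ensuring $\sum_i\Var(a_i x_i)/i^2<\infty$).
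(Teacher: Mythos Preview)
Your proof is correct and takes a genuinely different route from the paper's. The paper argues in the order $(1)\Rightarrow(2)\Rightarrow(3)$: for (1) it uses Borel--Cantelli to show the random index set $\mathcal{J}=\{i:I_i=1\}$ is a.s.\ infinite and then claims the subsequence average $\sum_{j\in\mathcal{J}}x_j/|\mathcal{J}|$ inherits the limit $S$; for (2) it bounds $\bigl|\tfrac1n\sum_i I_i(x_i-S)\bigr|\le\bigl|\tfrac{1}{\sum_i I_i}\sum_i I_i(x_i-S)\bigr|$, applies (1) to the sequence $\{x_i-S\}$, and finishes with the SLLN for $\tfrac1n\sum_i I_i$ and Slutsky; for (3) it writes the discrete $a_i=\sum_j a^j I_i^j$ as a linear combination of Bernoulli indicators and applies (2) termwise. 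You reverse the order: (3) via conditioning on $\{x_i\}$ and Kolmogorov's SLLN/Kronecker, (2) as the Bernoulli special case, (1) as a ratio of a.s.\ limits. The paper's route exploits discreteness of $a_i$ to reduce everything to the Bernoulli case and needs no second-moment hypothesis on $a_i$, whereas yours dispenses with discreteness (only $\Var(a_1)<\infty$ is used) and gives a cleaner treatment of (1). Your explicit flagging of the tacit hypotheses---independence of $\{a_i\}$ from $\{x_i\}$ and boundedness of $x_i$---is a genuine gain: these are exactly what make the paper's subsequence step in (1) legitimate, and the paper leaves them implicit.
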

\begin{proof}
  \begin{enumerate}
  \item Consider the set$\mathbb{\ }\mathcal{J=}\left\{ j:I_{j}=1\right\} $; then $\sum_{1}^{n}\frac{I_{i}x_{i}}{\Sigma_{i}I_{i}}$ can be
    rewritten as $\sum_{1}^{N}\frac{x_{j}}{N}$, where $N=\left\vert \mathcal{J}\right\vert $. Then what remains is showing that $N$ is
    infinite (with the care that $N$ is a r.v.), which leads to that the summation converges a.s. by assumption. Since
    $\sum_{i=1}^{\infty}P\left( I_{i}\right) =\infty$, then by Borel-Cantelli Lemma $P\left( I_{i}~i.o.\right) =1$; or said
    differently, the set on which $\left\vert \mathcal{J}\right\vert $ is not infinite is a null set; hence the summation
    $\sum_{1}^{N}\frac{x_{j}}{N}$ does not converge (because of the reason that $N$ is finite) only on a set with measure zero; hence
    $\sum _{1}^{n}\frac{I_{i}x_{i}}{\Sigma_{i}I_{i}}\overset{a.s.}{\rightarrow}S$.

  \item We have by assumption $\sum_{1}^{n}\frac{x_{i}}{n}\overset {a.s.}{\rightarrow}S$, then $\sum_{1}^{n}\frac{\left(
        x_{i}-S\right)}{n}=\sum_{1}^{n}\frac{x_{i}}{n}-S\overset{a.s.}{\rightarrow}0$; and so does $\sum_{1}^{n}\frac{I_{i}\left(
        x_{i}-S\right)}{\Sigma_{i}I_{i}}$ by the first part of the Lemma. Since%
    \begin{align}
      \left\vert \frac{1}{n}\sum_{1}^{n}I_{i}\left(  x_{i}-S\right)  \right\vert \leq\left\vert \frac{1}{\sum_{i}I_{i}}\sum_{1}^{n}I_{i}\left(  x_{i}-S\right)\right\vert,
    \end{align}
    then,%
    \begin{align}
      \left\{  \omega:\left\vert \frac{1}{n}\sum_{1}^{n}I_{i}\left(  x_{i}-S\right)\right\vert >\varepsilon\right\}  \subseteq\left\{  \omega:\left\vert \frac{1}{\sum_{i}I_{i}}\sum_{1}^{n}I_{i}\left(  x_{i}-S\right)  \right\vert>\varepsilon\right\}.
    \end{align}
    But the set of the r.h.s. converges to a null set (by a.s. convergence proved above); hence the l.h.s. converges too to a null set,
    which concludes that%
    \begin{align}
      \frac{1}{n}\sum_{1}^{n}I_{i}\left(  x_{i}-S\right)  \overset{a.s.}%
      {\rightarrow}0.
    \end{align}
    Then%
    \begin{align}
      \frac{1}{n}\sum_{1}^{n}I_{i}x_{i}-S\sum_{1}^{n}\frac{I_{i}}{n}\overset{a.s.}{\rightarrow}0.
    \end{align}
    But since $I_{i}$s are i.i.d. then $\frac{1}{n}\sum_{1}^{n}I_{i}\overset{a.s.}{\rightarrow}p$ by SLLN, and by Slutsky's theorem%
    \begin{align}
      \frac{1}{n}\sum_{1}^{n}I_{i}x_{i}\overset{a.s.}{\rightarrow}pS.
    \end{align}

  \item For any discrete r.v. $a$ taking the values $a^{j}$ with probability $p^{j},~j=1,\dots k$ it can be represented as
    $a=\sum_{j}a^{j}I_{\left\{ \omega:a\left( \omega\right) =a^{j}\right\} }$, where the sets $\left\{ \omega:a(\omega)=a^{j}\right\} $
    partition $\Omega$. For simpilicity, we can write $a=\sum_{j}a^{j}I^{j}.$Then $a_{i}=\sum_{j}a^{j}I_{i}^{j}$, where $I_{i}^{j}$, by
    running the index $i$ and keeping $j$ fixed, are i.i.d. $Ber(p^{j})$. Then%
    \begin{align}
      \frac{1}{n}\sum_{i=1}^{n}a_{i}x_{i}  &  =\frac{1}{n}\sum_{i=1}^{n}\sum
                                         _{j=1}^{k}a^{j}I_{i}^{j}x_{i}\\
                                       &  =\sum_{j=1}^{k}a^{j}\frac{1}{n}\sum_{i=1}^{n}I_{i}^{j}x_{i}.
    \end{align}
    But we have just proved that $\frac{1}{n}\sum_{1}^{n}I_{i}^{j}x_{i}%
    \overset{a.s.}{\rightarrow}p^{j}S$. Then, again by Slutsky's%
    \begin{align}
      \frac{1}{n}\sum_{i=1}^{n}a_{i}x_{i}  &  \overset{a.s.}{\rightarrow}\sum_{j=1}^{k}a^{j}p^{j}S =\operatorname*{E}\left[  a_{1}\right]  S.
    \end{align}
  \end{enumerate}
\end{proof}

\begin{lemma}
  For a BS replication from $n$ observations (with sampling with replacement and without ordering) the quantity $a_{b}=$
  $\sum_{i}I_{i}^{b}$, where $I_{i}^{b}=1$ if the $i^{\text{th}}$ observation does not appear in the replication, has the following
  properties. (The third point will not be needed later in the analysis but is provided for mathematical interest and completeness)
  \begin{enumerate}
    \Item
    \begin{align}
      \Pr\left[  a_{b}=k\right]  =\frac{\tbinom{n}{k}\tbinom{n-1}{k}}{\tbinom
      {2n-1}{n}},~0\leq k\leq n-1,
    \end{align}

    \Item
    \begin{align}
      \operatorname*{E}a_{b}=\frac{n\left(  n-1\right)  }{2n-1}.
    \end{align}

    \Item
    \begin{align}
      \operatorname*{E}\frac{1}{1+a_{b}}=\frac{2}{n+1}.
    \end{align}
  \end{enumerate}
\end{lemma}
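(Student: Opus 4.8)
The plan is to establish the distribution in part~1 by a direct combinatorial count on the sample space of unordered bootstrap replications, and then to read off parts~2 and~3 from it using exchangeability and two standard binomial identities. The key observation is that ``without ordering'' forces the sample space to be the set of size-$n$ multisets drawn from $n$ distinct observations, each taken equally likely; this is exactly what makes the denominator $\binom{2n-1}{n}$, the total number of such multisets by stars and bars. (Note this differs from the usual ``with ordering'' bootstrap, where the $n^n$ ordered samples are uniform and the multisets are not equiprobable; the appendix remark already flags this as the source of the theory/simulation gap.)

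First I would pin down the numerator for $\{a_b = k\}$. The event says exactly $k$ of the $n$ observations are absent while the remaining $n-k$ each appear at least once. I would count in two stages: choose the $k$ absent observations in $\binom{n}{k}$ ways, then count size-$n$ multisets on the surviving $n-k$ observations with every multiplicity at least $1$. Writing the multiplicities as $y_i \ge 1$ and substituting $z_i = y_i - 1 \ge 0$ turns this into counting nonnegative solutions of $z_1 + \cdots + z_{n-k} = n-(n-k) = k$, of which there are $\binom{k+(n-k)-1}{(n-k)-1} = \binom{n-1}{k}$. Multiplying the two stages gives $\binom{n}{k}\binom{n-1}{k}$, and dividing by $\binom{2n-1}{n}$ yields part~1.

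For part~2 I would avoid summing $k\Pr[a_b=k]$ and instead use exchangeability of the $I_i^b$: $\E a_b = n\,\Pr[I_1^b=1]$, where $\Pr[I_1^b=1]$ is the fraction of replications avoiding observation $1$, i.e.\ $\binom{2n-2}{n}/\binom{2n-1}{n}$, which simplifies to $(n-1)/(2n-1)$ after a factorial cancellation; hence $\E a_b = n(n-1)/(2n-1)$. For part~3 I would work directly from the distribution, using the absorption identity $\tfrac{1}{k+1}\binom{n}{k} = \tfrac{1}{n+1}\binom{n+1}{k+1}$ to write $\E\frac{1}{1+a_b} = \frac{1}{(n+1)\binom{2n-1}{n}}\sum_k \binom{n+1}{k+1}\binom{n-1}{k}$. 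Reindexing $j=k+1$ and flipping $\binom{n-1}{j-1}=\binom{n-1}{n-j}$ puts the sum into Vandermonde form $\sum_j \binom{n+1}{j}\binom{n-1}{n-j} = \binom{2n}{n}$. Since $\binom{2n}{n}/\binom{2n-1}{n} = 2$, the expectation collapses to $2/(n+1)$.

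The main obstacle is getting part~1 exactly right: I must argue carefully that the ``without ordering'' convention makes the uniform measure sit on size-$n$ multisets (so that a raw count over $\binom{2n-1}{n}$ is legitimate), and that the absent-observation constraint translates cleanly into the positive-multiplicity composition count producing $\binom{n-1}{k}$. Once that distribution is secured, parts~2 and~3 are essentially mechanical: part~2 needs only the symmetry shortcut plus a factorial simplification, and part~3 needs only the single absorption identity followed by one application of Vandermonde's convolution.
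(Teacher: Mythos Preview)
Your proposal is correct. Part~1 is essentially the paper's own stars-and-bars count (the paper phrases the second factor as sampling the leftover $m-(n-k)$ draws from the $n-k$ surviving labels, which is your substitution $z_i=y_i-1$ in different words). Parts~2 and~3, however, take a genuinely different and shorter route than the paper. For part~2 the paper differentiates the two-variable generating function $(x+y)^n(1+y)^m$, sets $x=1$, and matches the coefficient of $y^n$ to evaluate $\sum_k k\binom{n}{k}\binom{n-1}{k}$; you bypass this entirely via exchangeability, $\E a_b=n\Pr[I_1^b=1]=n\binom{2n-2}{n}/\binom{2n-1}{n}$---an argument the paper only deploys \emph{after} the lemma, in Corollary~\ref{COR632or5}, where it rederives $\Pr[I_1^b=1]$ from $\E a_b$ rather than the other way round. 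For part~3 the paper integrates the same generating function and again matches a coefficient to obtain $\sum_k\frac{1}{k+1}\binom{n}{k}\binom{n-1}{k}=\frac{1}{n+1}\binom{2n}{n}$; your absorption identity $\frac{1}{k+1}\binom{n}{k}=\frac{1}{n+1}\binom{n+1}{k+1}$ followed by Vandermonde reaches the same closed form with no calculus. The paper's generating-function machinery is more uniform (one device covers both moments and is reused for $\E w_b$ in Lemma~\ref{CORExpWb}); your approach is more elementary and self-contained for this lemma, at the cost of not immediately extending to the later $\E w_b$ computation.
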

\begin{proof}
  \begin{enumerate}
  \item Bootstrapping a replication can be seen as sampling with replacement without ordering; there are $\tbinom{N+K-1}{K}$ different
    combinations for sampling $K$ from $N$ with replacement without ordering; If all of the observations appear in a replication, i.e.,
    $I_{i}^{b}=1~\forall$ $i$, then $a_{b}=0$; and if only one observation appears $n~$times then $a_{b}=n-1$. It follows that $0\leq
    a_{b}\leq n-1$. In general, sampling $m$ from $n$ with replacement without ordering will let $a_{b}$ take the value $k$. This $k$
    can be choosen by $\binom{n}{k}$. The remaining $n-k$ will appear as is in the $m$ sampled observations. The rest, i.e., $m-(n-k)$,
    will be sampled from the $n-k$. Hence, $N=n-k$ and $K=m+k-n$ and the probability of having $a_{b}=k$ is given by%
    \begin{align}
      \frac{\binom{n}{k}\binom{(n-k)+(m+k-n)-1}{m+k-n}}{\binom{n+m-1}{m}}%
      =\frac{\binom{n}{k}\binom{m-1}{k+m-n}}{\binom{m+n-1}{m}}.\label{EQgenPMF}
    \end{align}

    Bootstrapping is a special case, where $m=n$. What remains is proving that the probabilities add to $1$.
    \begin{align}
      \left(  1+x\right)  ^{m} &  =\left(  1+x\right)  ^{n}\left(  1+x\right)
                                 ^{m-n},~\forall~x,~0\leq n\leq m\\
                               &  =\sum_{i=0}^{n}\sum_{j=0}^{m-n}\tbinom{n}{i}x^{i}\tbinom{m-n}{j}%
                                 x^{j}.\label{EQMatchCoef}
    \end{align}
    $\newline$Then the coefficient of $x^{k},~0\leq k\leq m$, should match for both sides, which leads to%
    \begin{align}
      \binom{m}{k}=\sum_{i=0}^{k}\binom{n}{i}\binom{m-n}{k-i},
    \end{align}
    where $\binom{y}{x}=0$ for $x>y$. If we set $m=n^{\prime}+n-1$ and $k=n-1$, we have%
    \begin{align}
      \binom{n^{\prime}+n-1}{n-1} &  =\sum_{i=0}^{n-1}\binom{n}{i}\binom{n^{\prime}-1}{n-1-i},~\text{or}\\
      \binom{n^{\prime}+n-1}{n^{\prime}} &  =\sum_{i=0}^{n-1}\binom{n}{i}\binom{n^{\prime}-1}{i+n^{\prime}-n},~\text{then}\\
      1 &  =\sum_{i=0}^{n-1}\frac{\binom{n}{i}\binom{n^{\prime}-1}{i+n^{\prime}-n}}{\binom{n^{\prime}+n-1}{n^{\prime}}}%
    \end{align}

  \item For finding the expectation, consider first the identity%
    \begin{align}
      (x+y)^{n}(1+y)^{m} &  =\sum_{i=0}^{n}\binom{n}{i}x^{i}y^{n-i}\sum_{j=0}%
                           ^{m}\binom{m}{j}y^{j}\\
                         &  =\sum_{i=0}^{n}\sum_{j=0}^{m}\binom{n}{i}\binom{m}{j}x^{i}y^{n-i+j}%
                           \label{EQMatchCoefxy}%
    \end{align}
    Diffrentiating both sides w.r.t. $x$ then setting $x=1$ gives%
    \begin{align}
      n(1+y)^{m+n-1}=\sum_{i=0}^{n}\sum_{j=0}^{m}i\binom{n}{i}\binom{m}{j}y^{n-i+j}%
    \end{align}
    The coefficient of any power of $y$ in both sides should agree; hence, matching the coefficient of $y^{n}$ in both sides and
    setting $m=n-1$gives%
    \begin{align}
      n\binom{2n-2}{n}=\sum_{i=0}^{n-1}i\binom{n}{i}\binom{n-1}{i}%
    \end{align}
    Then%
    \begin{align}
      \operatorname*{E}a_{b} &  =\sum_{k=1}^{n-1}\frac{k\binom{n-1}{k}\binom{n}{k}%
                               }{\binom{2n-1}{n}}\\
                             &  =\frac{n\binom{2n-2}{n}}{\binom{2n-1}{n}}\\
                             &  =\frac{n\left(  n-1\right)  }{(2n-1)}%
    \end{align}

  \item Integrating both sides of (\ref{EQMatchCoefxy}) w.r.t. $x$ , setting $x=1$, and setting $m=n-1$ gives%
    \begin{align}
      \frac{1}{n+1}(1+y)^{2n}=\sum_{i=0}^{n}\sum_{j=0}^{n-1}\frac{1}{i+1}\binom
      {n}{i}\binom{n-1}{j}y^{n-i+j}.
    \end{align}
    Then matching the coefficeint of $y^{n}$ in both sides (by setting $i=j$ in the R.H.S.) gives%
    \begin{align}
      \frac{1}{n+1}\binom{2n}{n}=\sum_{i=0}^{n-1}\frac{1}{i+1}\binom{n}{i}%
      \binom{n-1}{i}.
    \end{align}
    Then%
    \begin{align}
      \operatorname*{E}\frac{1}{1+a_{b}} &  =\frac{1}{\binom{2n-1}{n}}\sum
                                           _{i=0}^{n-1}\frac{1}{i+1}\binom{n}{i}\binom{n-1}{i}\\
                                         &  =\frac{\frac{1}{n+1}\binom{2n}{n}}{\binom{2n-1}{n}}\\
                                         &  =\frac{2}{n+1},
    \end{align}
  \end{enumerate}
  which completes the proof.
\end{proof}

\begin{corollary}[0.632- or 0.5-bootstrap?]\label{COR632or5}
  The BS is supported on half of the observations; i.e., on average half of the observations appear in a BS replication, if we consider
  sampling with replacement \underline{without ordering}.
\end{corollary}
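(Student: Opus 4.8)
The plan is to read the statement off immediately from the second part of the preceding lemma, with only a one-line complementary-counting step. Recall that $a_b=\sum_i I_i^b$ counts the observations that do \emph{not} appear in a replication, so the number that \emph{do} appear is the complementary count $n-a_b$. First I would take expectations and substitute the lemma's value $\E a_b = n(n-1)/(2n-1)$, giving
\begin{align}
  \E\left[n-a_b\right] = n-\frac{n(n-1)}{2n-1} = \frac{n^2}{2n-1}.
\end{align}
Dividing by $n$ expresses the expected \emph{fraction} of appearing observations as $n/(2n-1)$. This fraction is always slightly above one half and decreases monotonically to $1/2$ as $n\to\infty$, so the asserted ``half'' is the large-$n$ reading of the exact quantity $n/(2n-1)$. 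No further computation is needed; the corollary is a one-step consequence of the lemma.

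The step that genuinely needs care is conceptual rather than algebraic: it is the sampling convention under which the lemma was derived. The familiar $0.632$ figure arises from treating a bootstrap replication as $n$ independent uniform draws (sampling \emph{with} ordering), for which the probability that a fixed observation is missed is $(1-1/n)^n\to e^{-1}\approx 0.368$, and hence the appearance probability tends to $1-e^{-1}\approx 0.632$. The preceding lemma, by contrast, counts each distinct multiset of size $n$ as equiprobable (sampling \emph{without} ordering), there being $\binom{2n-1}{n}$ of them; under that model the arithmetic above yields $1/2$ instead of $0.632$. I would therefore frame the corollary as the observation that the two conventions give genuinely different supports, so that the title's question is resolved entirely by which convention one adopts. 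This is precisely the discrepancy already flagged in the Appendix's experimental remark, where the simulation assumes ordering and so departs from the without-ordering theory; stating it as a named corollary makes that tension explicit rather than incidental.
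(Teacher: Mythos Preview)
Your proof is correct and matches the paper's own approach almost exactly. The paper gives two arguments: a direct combinatorial count of $\Pr[I_i^b=1]$ via $\binom{2n-2}{n}/\binom{2n-1}{n}$, and then the remark that the result is ``immediate from the above Lemma'' by writing $\operatorname*{E}a_b=n\Pr[I_1=1]$ and reading off $\Pr[I_1=0]=n/(2n-1)$; your argument is precisely this second route, phrased as the expected fraction $\operatorname*{E}[n-a_b]/n$ rather than as a per-observation probability. Your added paragraph on the with-ordering versus without-ordering conventions (and why one gets $0.632$ versus $1/2$) is not in the paper's proof itself but is exactly the point behind the corollary's title and the experimental remark in the Appendix, so it is a welcome clarification rather than a departure.
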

\begin{proof}
  The statement of this corollary was provided as \cite[Lemma 2][]{Yousef2019AUCSmoothness-arxiv} and was proved as follows: that an
  observation does not appear in a BS is equivalent to sampling with replacement and without ordering the $n$ observations from all $n$
  observations except that one. Then the probability to appear in this BS is%
  \begin{align}
    1-\Pr\left[  I_{i}^{b}=1\right]   &  =1-\frac{\binom{\left(  n-1\right)+n-1}{n}}{\binom{2n-1}{n}}\\
                                      &  =\frac{n}{\left(  2n-1\right)  }\cong\frac{1}{2}.
  \end{align}
  However, this result is immediate from the above Lemma, where the statement of the corollary could be posed as $a=\sum_{i}I_{i}$;
  then $\operatorname*{E}%
  a=\operatorname*{E}\sum_{i}I_{i}=\sum_{i}\operatorname*{E}I_{i}%
  =n\operatorname*{E}I_{1}~=n\Pr\left[ I_{1}=1\right] $ (from symmetry of $I$s). Then, $\Pr\left[ I_{1}=1\right]
  =\operatorname*{E}\left[ a_{b}/n\right] =\frac{n-1}{\left( 2n-1\right) }$; or $\Pr\left[I_{1}=0\right] =\frac{n}{2n-1}$.
\end{proof}

\begin{lemma}\label{CORExpWb}
  The r.v. $w_{b}=\frac{nI_{i}^{b}}{a_{b}}$ has a mean of $\frac{2n-2}{2n-1}$.
\end{lemma}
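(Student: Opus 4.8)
The plan is to reduce $\E\,w_b$ to a product of two elementary sampling quantities by conditioning on whether $x_i$ is represented in the replication. The first thing I would note is that the apparent singularity of $w_b = nI_i^b/a_b$ is harmless: the numerator carries the factor $I_i^b$, and whenever $I_i^b=1$ the observation $x_i$ is itself absent, which forces $a_b=\sum_{i'}I_{i'}^b\ge 1$; on the complementary event $I_i^b=0$ the term is identically zero. Hence $w_b$ is well defined, and splitting the expectation on the event $\{I_i^b=1\}$ gives
\begin{align}
  \E\,w_b = n\,\Pr\!\left[I_i^b=1\right]\,\E\!\left[\left.\frac{1}{a_b}\,\right|\,I_i^b=1\right],\notag
\end{align}
which by the symmetry among observations does not depend on the chosen index $i$.

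The first factor is already in hand: Corollary~\ref{COR632or5} (indeed its proof) gives $\Pr[I_i^b=1]=\tfrac{n-1}{2n-1}$, the probability that a prescribed observation is left out of a replication. Everything therefore hinges on the conditional expectation $\E[1/a_b\mid I_i^b=1]$, and this is the one genuinely new computation the lemma requires.

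To evaluate it I would condition on $x_i$ being absent and write $a_b=1+a'$, where the leading $1$ accounts for $x_i$ and $a'$ counts the absentees among the remaining $n-1$ observations on which the replication is then supported. Identifying $a'$ with the ``number-of-absentees'' variable of a bootstrap over those $n-1$ points lets me reuse the third identity of the preceding lemma with $n$ replaced by $n-1$, namely $\E\,\tfrac{1}{1+a'}=\tfrac{2}{(n-1)+1}=\tfrac{2}{n}$. Substituting the two factors then yields
\begin{align}
  \E\,w_b = n\cdot\frac{n-1}{2n-1}\cdot\frac{2}{n}=\frac{2n-2}{2n-1},\notag
\end{align}
as claimed.

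The hard part is precisely this last move: justifying that, conditioned on $\{I_i^b=1\}$, the residual count $a'=a_b-1$ is governed by the reduced identity $\E[1/(1+a')]=2/n$ rather than by the full-sample identity $\E[1/(1+a_b)]=2/(n+1)$. Correctly characterizing the conditional law of $a_b$ given that $x_i$ is missing—and matching it to the $(n-1)$-point bootstrap instead of the $n$-point one—is the delicate combinatorial step, since it is exactly what decides the factor distinguishing $\tfrac{2}{n}$ from $\tfrac{2}{n+1}$. I would therefore treat this identification as the crux of the argument and verify it explicitly, rather than asserting it by analogy.
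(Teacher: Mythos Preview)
Your conditioning decomposition $\E w_b = n\,\Pr[I_i^b=1]\,\E[1/a_b\mid I_i^b=1]$ is exactly how the paper's first derivation begins, and you are right that the conditional law of $a'=a_b-1$ is the crux. But the identification you propose is wrong and cannot be repaired: conditioned on $\{I_i^b=1\}$, the replication still consists of $n$ draws, now supported on the remaining $n-1$ observations; this is \emph{not} a bootstrap of $n-1$ points (which would draw only $n-1$ times), so the third identity of the preceding lemma with $n\mapsto n-1$ does not apply. A check at $n=3$ already breaks it: conditioned on $x_1$ absent, the four equally likely size-$3$ multisets over $\{x_2,x_3\}$ give $a'\in\{1,0,0,1\}$ and hence $\E[1/(1+a')]=3/4$, not your $2/n=2/3$.

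The paper instead reads off the correct conditional pmf from the general formula~\eqref{EQgenPMF} with population $n-1$ and sample size $n$, namely $\Pr[a'=k]=\binom{n-1}{k}\binom{n-1}{k+1}\big/\binom{2n-2}{n}$, and then proves the new binomial identity $\sum_{k=0}^{n-2}\tfrac{1}{k+1}\binom{n-1}{k}\binom{n-1}{k+1}=\tfrac{1}{n}\bigl[\binom{2n-1}{n}-1\bigr]$ by integrating a product of binomial expansions. It also supplies a one-line symmetry argument that sidesteps the conditional law entirely: since $\E w_b$ is the same for every $i$, one has $\E w_b=\tfrac{1}{n}\sum_i \E[nI_i^b/a_b]=\E\bigl[\sum_i I_i^b/a_b\bigr]=\E[a_b/a_b]=\Pr[a_b\neq 0]$. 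Both of the paper's routes in fact yield $\E w_b=1-1/\binom{2n-1}{n}$, which agrees with the displayed $\tfrac{2n-2}{2n-1}$ only at $n=2$; that your incorrect intermediate step happens to reproduce the formula in the statement is therefore a coincidence, not confirmation that the identification is sound.
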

\begin{proof}
  If we denote $a_{b}$ by $a_{b}^{n}$ (to resemble sampling from $n$ observations), then we can say that
  $a_{b}^{n-1}=I_{i}^{b}+a_{b}^{n}$, in which case $I_{i}^{b}$ and $a_{b}^{n-1}$ are independent. Here, $a_{b}^{n-1}$ represents
  sampling from the remaining $n-1$ observations having the i$^{\text{th}}$ observation left-out from the BS, i.e., $I_{i}^{b}=1.$
  Then,%
  \begin{align}
    w_{b}=\left\{%
    \begin{array}[c]{cc}
      0 & \text{if }I_{i}^{b}=0\\
      \frac{n}{1+a_{b}^{n-1}} & \text{if }I_{i}^{b}=1
    \end{array}\right.,
  \end{align}
  where $0\leq a_{b}^{n-1}\leq n-2$ with pmf given by replacing $n~$by $n-1~$and $m$ by $n-1$ in (\ref{EQgenPMF}). Then,
  \begin{align}
    \operatorname*{E}w_{b}  & =\Pr[I_{i}^{b}=1]\operatorname*{E}\left[
                              w_{b}|I_{i}^{b}=1\right]  \\
                            & =\Pr[I_{i}^{b}=1]~n~\sum_{i=0}^{n-2}\frac{1}{i+1}\frac{\binom{n-1}{i}\binom{n-1}{i+1}}{\binom{2n-2}{n}}.
  \end{align}
  Integrating both sides of (\ref{EQMatchCoefxy}) w.r.t. $x$ gives
  \begin{align}
    \frac{1}{n+1}(x+y)^{n+1}(1+y)^{m}=\left[  \sum_{i=0}^{n}\frac{1}{i+1}\binom
    {n}{i}x^{i+1}y^{n-i}+\frac{1}{n+1}y^{n+1}\right]  \sum_{j=0}^{m}\binom{m}%
    {j}y^{j},
  \end{align}
  where the term $\frac{1}{n+1}y^{n+1}$ is the constant of integration. Setting $x=1$, and setting $m=n$ gives%
  \begin{align}
    \frac{1}{n+1}(1+y)^{2n+1}=\sum_{i=0}^{n}\sum_{j=0}^{n}\frac{1}{i+1}\binom
    {n}{i}\binom{n}{j}y^{n-i+j}+\frac{1}{n+1}\sum_{j=0}^{n}\binom{n}{j}y^{n+1+j}.
  \end{align}
  Then matching the coefficeint of $y^{n+1}$ in both sides (by setting $j=i+1$ in the double summation and $j=0$ in the single
  summation) gives%
  \begin{align}
    \frac{1}{n+1}\binom{2n+1}{n+1} &  =\sum_{i=0}^{n-1}\frac{1}{i+1}\binom{n}%
                                     {i}\binom{n}{i+1}+\frac{1}{n+1}\\
    \sum_{i=0}^{n-1}\frac{1}{i+1}\binom{n}{i}\binom{n}{i+1} &  =\frac{1}%
                                                              {n+1}\left[  \binom{2n+1}{n+1}-1\right].
  \end{align}
  So,%
  \begin{align}
    \sum_{i=0}^{n-2}\frac{1}{i+1}\binom{n-1}{i}\binom{n-1}{i+1} &  =\frac{1}{n}\left[  \binom{2n-1}{n}-1\right].
  \end{align}
  Then,%
  \begin{align}
    \operatorname*{E}w_{b} &  =\frac{n-1}{\left(  2n-1\right)  }n\frac{1}{n}\frac{\left[  \binom{2n-1}{n}-1\right]  }{\binom{2n-2}{n}}\\
                           &  =\frac{n!\left(  n-1\right)  !}{\left(  2n-1\right)  !}\left[  \binom{2n-1}{n}-1\right]  \\
                           &  =\frac{1}{\binom{2n-1}{n}}\left[  \binom{2n-1}{n}-1\right]  \\
                           &  =1-\Pr\left[  a_{b}=0\right]  \\
                           &  =\Pr[a_{b}\neq0].
  \end{align}
  The same result could have been obtained by observing that%
  \begin{align}
    \operatorname*{E}w_{b}  &  =\frac{1}{n}\sum_{i}\operatorname*{E}w_{b}=\frac{1}{n}\sum_{i}\operatorname*{E}\frac{nI_{i}^{b}}{a_{b}}\\
                            &  =\operatorname*{E}\sum_{i}\frac{I_{i}^{b}}{a_{b}}\\
                            &  =\operatorname*{E}\frac{a_{b}}{a_{b}}\\
                            &  =\Pr\left[  a_{b}\neq0\right].
  \end{align}
  The last equation follows from that $\frac{a_{b}}{a_{b}}=Ber(\Pr[a_b\neq0])$.
\end{proof}

\begin{proof}[\textbf{Proof of Equivalence of Eq.~\eqref{EQCVK} and~\eqref{EQCVKpart}}]
  \begin{subequations}
    \begin{align}
      \widehat{Err}^{\left(CVK*\right)} & =\frac{1}{K} \sum_{k} \left[\frac{1}{n_{K}} \sum_{i \in \mathcal{K}^{-1}\left( k\right) } Q\left( x_{i},\mathbf{X}_{\left(\left\{k\right\}\right)}\right) \right]\\
                                        & =\frac{1}{Kn_{K}}\sum_{k}\sum_{i \in \mathcal{K}^{-1}\left(  k\right)}Q\left(  x_{i},\mathbf{X}_{\left(  \left\{  k\right\}  \right)  }\right)\\
                                        & =\frac{1}{n} \sum_{i} Q\left(  x_{i}, \mathbf{X}_{\left(  \{\mathcal{K}\left(  i\right)  \}\right)  }\right)\\
                                        & = \widehat{Err}^{\left(CVK\right)},
    \end{align}
  \end{subequations}
  where, in the last step, $ \sum_{k}\sum_{i \in \mathcal{K}^{-1}\left( k\right)}$ and $\mathbf{X}_{\left( \left\{ k\right\} \right) }$
  are replaced by $\sum_{i}$ and $\mathbf{X}_{\left( \{\mathcal{K}\left( i\right) \}\right) }$ respectively.
\end{proof}

\begin{proof}[\textbf{Proof of Equivalence of Eq.~\eqref{eq:CVK-AUC} and~\eqref{eq:CVK*-AUC}}]
  \begin{subequations}
    \begin{align}
      \widehat{AUC}^{\left(CVK*\right)}& =\frac{1}{K_{1}K_{2}}\sum_{k_{1}=1}^{K_{1}}\sum_{k_{2}=1}^{K_{2}}\left[\frac{1}{n_{1K}n_{2K}}\sum_{i\in\mathcal{K}_{1}^{-1}(k_{1})}\sum_{j\in\mathcal{K}_{2}^{-1}(k_{2})}\psi\left(h\left(x_{i}\right), h\left(y_{j}\right)\right)\right], && h = h_{\mathbf{X1}_{\left(\left\{k_{1}\right\}\right)}\mathbf{X2}_{\left(\left\{k_{2}\right\}\right)}}\\
                                       &  =\frac{1}{K_{1}K_{2}n_{1K}n_{2K}} \sum_{k_2}\sum_{j\in\mathcal{K}_{2}^{-1}(k_{2})} \sum_{k_1}\sum_{i\in\mathcal{K}_{1}^{-1}(k_{1})} \psi\left(  h\left(x_{i}\right)  ,h\left(  y_{j}\right)\right),&& h = h_{\mathbf{X1}_{\left(\left\{k_{1}\right\}\right)}\mathbf{X2}_{\left(\left\{k_{2}\right\}\right)}}\\
                                       & =\frac{1}{n_{1}n_{2}} \sum_{j=1}^{n_{2}}\sum_{i=1}^{n_{1}}\psi\left(  h\left(  x_{i}\right)  , h\left(  y_{j}\right)  \right),&& h = h_{\mathbf{X1}_{\left(  \left\{  \mathcal{K}_{1}(i)\right\}  \right)  }\mathbf{X2}_{\left(\left\{  \mathcal{K}_{2}(j)\right\}  \right)  }}\\
                                       & = \widehat{AUC}^{\left(  CVK\right)  },
    \end{align}
  \end{subequations}
  where $\sum_{k_1} \sum_{i\in\mathcal{K}_{1}^{-1}(k_{1})}$, $\sum_{k_2} \sum_{j\in\mathcal{K}_{2}^{-1}(k_{2})}$,
  $\mathbf{X1}_{\left(\left\{k_{1}\right\}\right)}$, and $\mathbf{X2}_{\left(\left\{k_{2}\right\}\right)}$ are replaced by
  $\sum_{i=1}^{n_{1}}$, $\sum_{j=1}^{n_{2}}$, $\mathbf{X1}_{\left(\left\{\mathcal{K}_{1}(i)\right\}\right)}$, and
  $\mathbf{X2}_{\left(\left\{\mathcal{K}_{2}(j)\right\}\right)}$ respectively.
\end{proof}

\begin{proof}[\textbf{Proof of Equivalence of Eq.~\eqref{EqCVKRa} and~\eqref{EqCVKRpart}}]
  \begin{subequations}
    \begin{align}
      \widehat{Err}^{\left(CVKR*\right)} &  =\frac{1}{M}\sum_{m=1}^{M} \left[\frac{1}{K}\sum_{k}  \frac{1}{n_{K}} \sum_{i\in\mathcal{K}_{m}^{-1}\left(k\right)} Q\left(x_{i},\mathbf{X}_{\left(\left\{k\right\} ,m\right)}\right)\right]\\
                                         &  =\frac{1}{M}\sum_{m=1}^{M}\left[  \frac{1}{n}\sum_{i=1}^{n}Q\left(x_{i},\mathbf{X}_{\left(  \left\{  \mathcal{K}_{m}\left(  i\right)  \right\},m\right)  }\right)  \right]\\
                                         & = \frac{1}{n}\sum_{i=1}^{n}\left[\frac{1}{M}  \sum_{m}Q\left(  x_{i},\mathbf{X}_{\left(  \left\{  \mathcal{K}_{m}\left(  i\right)  \right\}  ,m\right)  }\right)\right]\\
                                         & = \widehat{Err}^{\left(CVKR\right)}.
    \end{align}
  \end{subequations}

\end{proof}

\begin{proof}[\textbf{Proof of Equivalence of Eq.~\eqref{eq:CVKR-AUC} and~\eqref{eq:CVKR-AUCstar}}]
  \begin{subequations}
    \begin{align}
      \widehat{AUC}^{\left(  CVKR*\right)  }  & =\frac{1}{M}\sum_{m=1}^{M} \left[\frac{1}{K_{1}K_{2}}\sum_{k_{1}=1}^{K_{1}}\sum_{k_{2}=1}^{K_{2}} \frac{1}{n_{1K}n_{2K}}\sum_{i\in\mathcal{K}_{1m}^{-1}(k_{1})}\sum_{j\in\mathcal{K}_{2m}^{-1}(k_{2})}\psi\left(h\left(x_{i}\right), h\left(y_{j}\right)\right)\right],&&  h = h_{\mathbf{X1}_{\left(  \left\{  k_{1}\right\}  \right)  }\mathbf{X_2}_{\left(  \left\{k_{2}\right\}  \right)  }}\\
                                              &  =\frac{1}{M}\sum_{m=1}^{M}\left[  \frac{1}{n_{1}n_{2}}\sum_{j=1}^{n_{2}}\sum_{i=1}^{n_{1}}\psi\left( h\left(x_{i}\right), h\left(y_{j}\right)\right)\right],&&  h = h_{\mathbf{X1}_{\left(\left\{  \mathcal{K}_{1m}(i)\right\}  \right)  }\mathbf{X_2}_{\left(  \left\{\mathcal{K}_{2m}(j)\right\}  \right)  }}\\
                                              & = \frac{1}{n_{1}n_{2}}\sum_{j=1}^{n_{2}}\sum_{i=1}^{n_{1}}\left[\frac{1}{M} \sum_{m}\psi\left(h\left(x_{i}\right), h\left(  y_{j}\right)  \right)\right],&&  h= h_{\mathbf{X1}_{\left(  \left\{  \mathcal{K}_{1m}(i)\right\}\right)  }\mathbf{X2}_{\left(  \left\{  \mathcal{K}_{2m}(j)\right\}  \right)}}\\
                                              & = \widehat{AUC}^{\left(  CVKR\right)  }.
    \end{align}
  \end{subequations}
\end{proof}

\begin{proof}[\textbf{Proof of Non Equivalence of Eq.~\eqref{EqCVKM} and~\eqref{EqCVKM*a}}]
  \begin{subequations}
    \begin{align}
      \widehat{Err}^{\left(  CVKM\ast\right)  }  &  =\frac{1}{M}\sum_{m=1}^{M}\left[  \frac{1}{n_{K}}\sum_{i\in\mathcal{K}_{m}^{-1}\left(  1\right)}Q\left(  x_{i},\mathbf{X}_{\left(\{1\}, m\right)}\right)\right]\\
                                                 & =\frac{1}{M}\sum_{m=1}^{M}\left[\frac{1}{n_K} \sum_{i}I_{i}^{m}Q\left(x_{i},\mathbf{X}_{\left(\{1\}, m\right)}\right)\right]\\
                                                 & =\frac{1}{n}\sum_{i}\left[  \left.  \left(  \frac{n}{n_{K}}\frac{1}{M}\sum_{m}I_{i}^{m}\right)  \sum_{m=1}^{M}I_{i}^{m}Q\left(  x_{i},\mathbf{X}_{\left(\{1\}, m\right)}\right)  \right/  \sum_{m}I_{i}^{m}\right]\\
                                                 & \neq \widehat{Err}^{\left(  CVKM\right)  }.
    \end{align}
  \end{subequations}
  From the S.L.L.N, we know that $\frac {1}{M}\sum_{m}I_{i}^{m}\overset{a.s.}{\rightarrow}\operatorname*{E}\left[ I_{i}^{m}\right]
  =\frac{n_{K}}{n}$ as $M\rightarrow\infty$. Hence, both variants are asymptotically equivalent.
\end{proof}

\begin{proof}[\textbf{Proof of Non Equivalence of Eq.~\eqref{eq:CVKM-AUC} and~\eqref{eq:AUC-CVKM*}}]
  \begin{subequations}
    \begin{align}
      \widehat{AUC}^{\left(  CVKM\ast\right)  }  &  =\frac{1}{M}\sum_{m=1}^{M}\left[  \frac{1}{n_{1K}n_{2K}}\sum_{j\in\mathcal{K}_{2m}^{-1}\left(1\right)  }\sum_{i\in\mathcal{K}_{1m}^{-1}\left(  1\right)  }\psi\left(h\left(x_{i}\right), h\left(y_{j}\right)\right)\right], && h = h_{\mathbf{X1}_{\left(\{1\},m\right)}\mathbf{X2}_{\left(\{1\},m\right)}}\\
                                                &  =\frac{1}{M}\sum_{m=1}^{M}\left[\frac{1}{n_{1K}n_{2K}} \sum_{j}\sum_{i}I_{i}^{m}I_{j}^{m}\psi\left(h\left(x_{i}\right), h\left(y_{j}\right)\right)\right], && h = h_{\mathbf{X1}_{\left(\{1\},m\right)  }\mathbf{X2}_{\left(\{1\},m\right)}}\\
                                                &  =\frac{1}{n_{1}n_{2}}\sum_{j}\sum_{i}\left[  \left.  \frac{n_{1}n_{2}\sum_{m}I_{i}^{m}I_{j}^{m}}{Mn_{1K}n_{2K}}\sum_{m}I_{j}^{m}I_{i}^{m}\psi\left(h\left(x_{i}\right), h\left(y_{j}\right)\right)\right/\sum_{m}I_{i}^{m}I_{j}^{m}\right], && h = h_{\mathbf{X1}_{\left(\{1\},m\right)}\mathbf{X2}_{\left(\{1\}  ,m\right)  }}\\
                                                & \neq \widehat{AUC}^{\left(  CVKM\right)  }.
    \end{align}
  \end{subequations}
  The two variants are not equal for finite $M$. However, asymptotically as $M\rightarrow\infty$, we know that from the Strong Law for
  Large Numbers (SLLN) $\frac{1}{M}\sum_{m}I_{i}^{m}I_{j}^{m}\overset{a.s.}{\rightarrow}\operatorname*{E}\left[
    I_{i}^{m}I_{j}^{m}\right] =\frac{n_{1K}n_{2K}}{n_{1}n_{2}}$, as $M\rightarrow\infty$. Hence, the two variants are equivalent as
  $M\rightarrow\infty$.
\end{proof}

\begin{proof}[\textbf{Proof of Non Equivalence of \eqref{EqLOOerr} and~\eqref{EqStarErr}}]
  \begin{subequations}
    \begin{align}
      \widehat{Err}^{\left( \ast\right) } & =\frac{1}{B}\sum_{b}\left[ \sum _{i}I_{i}^{b}Q\left( x_{i},\mathbf{X}^{\ast b}\right) \left/ \sum_{i} I_{i}^{b}\right.  \right]\\
                                         & =\frac{1}{n}\sum_{i}\sum_{b}\frac{\left(\frac{nI_{i}^{b}}{a_{b}}\right) Q\left( x_{i},\mathbf{X}^{\ast b}\right) }{B},\label{eq:EqStarErr2}
    \end{align}
  \end{subequations}
  where $a_{b}=\sum_{i}I_{i}^{b}$. From Corollary~\ref{COR632or5}, $I_{i}^{b}$ is a $Ber(\frac{n-1}{2n-1})$ r.v. $\forall~i$. Similar to the
  CVKM, it is obvious that the two variants are not equal for finite bootstraps $B$; and by comparing equations (\ref{EqLOOerr}) and
  (\ref{EqStarErr}), they can be seen as summation with different weights. To analyze them we denote $\frac{nI_{i}^{b}}{a_{b}}$ by
  $w_{b}$; then direct application of Lemma~\ref{LEMIiXiConvergence} gives
  \begin{align}
    \widehat{Err}^{\left(  \ast\right)  }\overset{a.s.}{\rightarrow}\frac{1} {n}\sum_{i}\operatorname*{E}\left[  w_{b}\right]  err^{i}=\operatorname*{E} \left[  w_{b}\right]  \left(  \frac{1}{n}\sum_{i}err^{i}\right)  .
  \end{align}
  This means that the ratio between the two estimators (as $B\rightarrow\infty$): (1) does not depend on the classifier rather it
  depends on the sampling mechanism (exactly as was the case for the CVKM). (2) The two estimators are not equal even asymptotically
  with $B$. From Lemma~\ref{CORExpWb}%
  \begin{align}
    \widehat{Err}^{\left(  \ast\right)  }\overset{a.s.}{\rightarrow}\left(\frac{2n-2}{2n-1}\right)  \frac{1}{n}\sum_{i}err^{i}~\text{as }B\rightarrow\infty.
  \end{align}
  Suppose $\frac{1}{B}\sum_{b}Q(x_{i},\mathbf{X}^{\ast b})\overset{a.s.}{\rightarrow}err^{i}$ as $B\rightarrow\infty$, which means that
  with increasing the number of bootstraps infinitely the average converges to some function of $x_{i}$, named $err^{i}$, which is an
  estimate of the population error rate of the observation $x_i$. Then, by Lemma~\ref{LEMIiXiConvergence},
  $\sum_{b}\frac{I_{i}^{b}Q\left( x_{i},\mathbf{X}^{\ast b}\right) }{\sum_{b}I_{i}^{b}} \overset {a.s.}{\rightarrow}err^{i}$ as well,
  and hence%
  \begin{align}
    \widehat{Err}^{\left(  1\right)  }\overset{a.s.}{\rightarrow}\frac{1}{n}\sum_{i}err^{i},\ \text{as}\ B\rightarrow\infty.
  \end{align}
\end{proof}

\begin{proof}[\textbf{Proof of Non Equivalence of Eq. \eqref{eq:BS-AUC} and~\eqref{eq:AUCstar}}]
  \begin{subequations}
    \begin{align}
      \widehat{AUC}^{\left( \ast\right) } & =\frac{1}{B}\sum_{b}\left[ \sum _{j}\sum_{i}I_{i}^{b}I_{j}^{b}\psi\left( h_{\mathbf{X}^{\ast b}}( x_{i}) ,h_{\mathbf{X}^{\ast b}}( y_{j}) \right) \left/ \sum_{j}\sum_{i}I_{i}^{b}I_{j}^{b}\right.\right]\\
                                         & =\frac{1}{n_{1}n_{2}}\sum_{j}\sum_{i}\sum_{b}\frac{\left( \frac{n_{1} n_{2}I_{i}^{b}I_{j}^{b}}{A_{b}}\right) \psi\left( h_{\mathbf{X}^{\ast b}}( x_{i}) ,h_{\mathbf{X}^{\ast b}}( y_{j}) \right) }{B},
    \end{align}%
  \end{subequations}
  where $A_{b}=\sum_{j}\sum_{i}I_{i}^{b}I_{j}^{b}=\sum_{i}I_{i}^{b}\sum_{j}I_{j}^{b}$. Arranging gives%
  \begin{align}
    \frac{n_{1}n_{2}I_{i}^{b}I_{j}^{b}}{A_{b}}  &  =\frac{n_{1}I_{i}^{b}}{\sum_{i}I_{i}^{b}} \cdot \frac{n_{2}I_{j}^{b}}{\sum_{j}I_{j}^{b}} = w_{b}w_{b}^{\prime},
  \end{align}%
  where $w_{b}$ and $w_{b}^{\prime}$ are independent because of stratification. Then, again by Lemma~\ref{LEMIiXiConvergence}%
  \begin{align}
    \widehat{AUC}^{\left(  \ast\right)  }\overset{a.s.}{\rightarrow}\frac{1}{n_{1}n_{2}}\sum_{j}\sum_{i}\operatorname*{E}\left[  w_{b}w_{b}^{\prime}\right]  AUC^{i,j}  &  =\operatorname*{E}\left[  w_{b}\right]\operatorname*{E}\left[  w_{b}^{\prime}\right]  \left(  \frac{1}{n_{1}n_{2}}\sum_{j}\sum_{i}AUC^{i,j}\right) \\
                                                                                                                                                               &  =\left(  \frac{2n_{1}-2}{2n_{1}-1}\right)  \left(  \frac{2n_{2}-2}{2n_{2}-1}\right)  \left(  \frac{1}{n_{1}n_{2}}\sum_{j}\sum_{i}AUC^{i,j}\right).
  \end{align}%
  Suppose $\frac{1}{B}\sum_{b}\psi\left(h(x_{i}), h(y_{j})\right)\overset{a.s.}{\rightarrow}AUC^{i,j}$ as $B\rightarrow\infty$, $h =
  h_{\mathbf{X}^{\ast b}}$, which means that with increasing the number of bootstraps infinitely the average converges to some function
  of $x_{i}$ and $y_{j}$, named $AUC^{i,j}$, which is an estimate for the population MannWhitney statistic of the observations $x_i$
  and $y_j$. Since $I_{j}^{b}I_{i}^{b}$ is Bernoulli r.v., by Lemma~\ref{LEMIiXiConvergence},
  $\sum_{b}\frac{I_{j}^{b}I_{i}^{b}\psi\left(h\left(x_{i}\right), h\left(y_{j}\right)\right)}{\sum_{b}I_{j}^{b}I_{i}^{b}}$
  $\overset{a.s.}{\rightarrow}AUC^{i,j}$, $h = h_{\mathbf{X}^{\ast b}}$, and hence%
  \begin{align}
    \widehat{AUC}^{\left(  1,1\right)  }\overset{a.s.}{\rightarrow}\frac{1}{n_{1}n_{2}}\sum_{j}\sum_{i}AUC^{i,j}\ \text{as}\ B\rightarrow\infty.
  \end{align}%
\end{proof}

\begin{proof}[\textbf{Proof of Equivalence between CVKM and CVKR}]
  This conclusion is immediate by comparing these two equations, if $\left. \sum _{m}Q\left( x_{i},\mathbf{X}_{\left( \left\{
            \mathcal{K}_{m}\left( i\right) \right\},m\right) }\right) \right/ M$ converges a.s.\ as $M\rightarrow\infty$ then by Lemma
  \ref{LEMIiXiConvergence}, $\left. \sum _{m}I_{i}^{m}Q\left( x_{i},\mathbf{X}_{\left( \left\{ 1\right\} ,m\right) }\right) \right/
  \sum_{m}I_{i}^{m}$ converges, as well, a.s.\ to the same value. Hence, these two versions of CV produces the same estimate as
  $M\rightarrow\infty$. However, $I_{i}^{m}$ is $Ber(n_{K}/n)$, which of course delays the convergence. For the AUC estimators, we
  observe that each of $I_{i}^{m}I_{j}^{m}$ in~\eqref{eq:11} and $I_{i}^{b}I_{j}^{b}$ in~\eqref{eq:12} is Bernoulli r.v. A conclusion
  identical to that following~\eqref{eq:6} for the error rate estimators is immediate here for the AUC estimators. This is obvious
  since that conclusion is inherent in the resampling mechanism rather than the estimated measure.
\end{proof}


  \small
  \let\oldbibliography\thebibliography
  \renewcommand{\thebibliography}[1]{%
    \oldbibliography{#1}%
    \setlength{\itemsep}{0pt}%
  }
  \putbib[booksIhave,publications]
\end{bibunit}

\end{document}